\newtheorem{theorem}{Theorem}
\newtheorem{corollary}{Corollary}
\newtheorem{lemma}{Lemma}
\newtheorem{assumption}{Assumption}
\newtheorem{example-non*}{Example}
\newtheorem{remark}{Remark}
\newcommand{\expect}{\operatorname{E}\expectarg}
\DeclarePairedDelimiterX{\expectarg}[1]{[}{]}{%
  \ifnum\currentgrouptype=16 \else\begingroup\fi
  \activatebar#1
  \ifnum\currentgrouptype=16 \else\endgroup\fi
}
\newcommand{\bs}{\boldsymbol}
\newcommand{\mr}{\mathrm}
\newcommand{\innermid}{\nonscript\;\delimsize\vert\nonscript\;}
\newcommand{\activatebar}{%
  \begingroup\lccode`\~=`\|
  \lowercase{\endgroup\let~}\innermid 
  \mathcode`|=\string"8000
}
\DeclareMathOperator*{\argmax}{arg\,max}
\newcommand{\revn}[1]{{\color{blue}#1}}
\newcommand{\revn}[1]{#1}
\newcommand{\com}[1]{{\color{red}#1}}
\newcommand{\com}[1]{}
\newcommand{\cnew}[1]{{\color{red}Comment: #1}}
\newcommand{\cnew}[1]{}
\newcommand{\add}[1]{{\color{blue}#1}}
\newcommand{\add}[1]{}
\newcommand{\rev}[1]{{\color{blue}#1}}
\newcommand{\rev}[1]{#1}
\newcommand{\ck}[1]{{\color{red}#1}}
\newcommand{\ck}[1]{#1}
\newcommand{\ask}[1]{{\color{red}COM: #1}}
\newcommand{\ask}[1]{}
\icmltitlerunning{Episodic Multi-armed Bandits}
\begin{document}
	
	\twocolumn[
	\icmltitle{Episodic Multi-armed Bandits}
	
	
	
	
	\begin{icmlauthorlist}
		\icmlauthor{Cem Tekin}{to}
		\icmlauthor{Mihaela van der Schaar}{goo}
	\end{icmlauthorlist}
	
	\icmlaffiliation{to}{Department of Electrical and Electronics Engineering, Bilkent University, Turkey}
	\icmlaffiliation{goo}{Department of Engineering, University of Oxford, United Kingdom}
	
	\icmlcorrespondingauthor{Cem Tekin}{cemtekin@ee.bilkent.edu.tr}
	\icmlcorrespondingauthor{Mihaela van der Schaar}{mihaela.vanderschaar@oxford-man.ox.ac.uk}
	
	\icmlkeywords{Episodic multi-armed bandits, online learning, reinforcement learning, submodularity}
	
	\vskip 0.3in
	]
	
	
	
	\printAffiliationsAndNotice{}  

\begin{abstract} 
We introduce a new class of reinforcement learning methods referred to as {\em episodic multi-armed bandits} (eMAB). In eMAB the learner
proceeds in {\em episodes}, each composed of several {\em steps}, in which it chooses an action and observes a feedback signal. Moreover, in each step, it can take a special action, called the $stop$ action, that ends the current episode. After the $stop$ action is taken, the learner collects a terminal reward, and observes the costs and terminal rewards associated with each step of the episode.
The goal of the learner is to maximize its cumulative gain (i.e., the terminal reward minus costs) over all episodes by learning to choose the best sequence of actions based on the feedback. First, we define an {\em oracle} benchmark, which sequentially selects the actions that maximize the expected immediate gain. 
Then, we propose our online learning algorithm, named {\em FeedBack Adaptive Learning} (FeedBAL), and prove that its regret with respect to the benchmark is bounded with high probability and increases logarithmically in expectation. Moreover, the regret only has polynomial dependence on the number of steps, actions and states.
eMAB can be used to model applications that involve humans in the loop, ranging from personalized medical screening to personalized web-based education, where sequences of actions are taken in each episode, and optimal behavior requires adapting the chosen actions based on the feedback.
\end{abstract} 

\section{Introduction}\label{sec:intro} 
 
Many applications involving sequential decision making under uncertainty
can be formalized as multi-armed bandits (MAB): clinical trials \cite{lai1}, recommender systems and
web advertising \cite{slivkins2009contextual,li2010contextual} etc.
A common assumption in all these problems is that each decision step
involves taking an action after which a reward is observed. Although MAB extensions
also allow for settings in which the rewards are missing, delayed
or erroneous or multiple actions are taken simultaneously, 
in numerous applications such as
humanoid robot locomotion \cite{nassour2013qualitative}, online education \cite{IT6}
and healthcare \cite{schaefer2004modeling}, each decision step involves
taking multiple actions whose reward is only
revealed after the entire action sequence is completed and a decision
is made to stop the action sequence and (possibly) take a final action.

For instance, in personalized online education, a sequence of materials can be used
to teach or remind students the key concepts of a course subject.
While the final exam is used as a benchmark to evaluate the overall
effectiveness of the given sequence of teaching materials, a sequence
of intermediate feedbacks like students' performance on quizzes, homework
grades, etc., can be used to guide the teaching examples online. Similarly,
in personalized healthcare, a sequence of treatments is given to a
patient over a period of time. The overall effectiveness of the treatment
plan depends on the given treatments as well as their order \cite{schaefer2004modeling}.
Moreover, the patient can be monitored during the course of the treatment
which yields a sequence of feedbacks about the selected treatments,
while the final outcome is only available after the entire sequence
of treatments is completed.

In conclusion, in such sequential decision making problems the 
order of the taken actions matters. Moreover, the feedback
available after each taken action drives the action selection process.
We call online learning problems exhibiting the aforementioned properties
{\em episodic multi-armed bandits} (eMAB). In eMAB the learner proceeds
in episodes $\rho=1,2,\ldots$ composed of multiple steps, in which
the learner selects actions sequentially in steps, one after another,
with each action belonging to the action set ${\cal A}$. After each
taken action $a \in {\cal A}$, a feedback $f \in {\cal F}$ is observed
about the taken action. Based on all its previous observations in
that episode, the learner either decides to continue to the next step
by selecting another action or selecting a $stop$ action which ends
the current episode, yields a terminal reward, and starts the next episode. Hence, the number of
steps in each episode is a decision variable, and the terminal rewards and losses of the steps in an episode
are observed only after the $stop$ action is taken. 
The goal of the learner is to maximize its total expected gain (i.e., the terminal reward minus costs) over all episodes by learning to choose the best action sequence given the feedback.
An illustration that shows the order of steps, costs, terminal rewards and episodes
is given in Figure \ref{fig:mainfigure}.

Observing the terminal rewards of the previous steps is possible in many problems where actions correspond to revealing hidden features, and the $stop$ action corresponds to performing classification or detection using the features revealed so far. These include active sensing problems \cite{yu2009active}, and multi-view classification based on the observed features \cite{muslea2002active}.

The contributions are summarized as follows:
\begin{itemize}
\item We propose a new online learning model called eMAB, which covers other learning models including the online adaptive submodular maximization problem \cite{gabillon2013adaptive} as special cases and propose the {\em FeedBack Adaptive Learning} (FeedBAL) algorithm.
\item We compare FeedBAL with a benchmark that always chooses the myopic best action given the current feedback, and prove that it achieves $O(\log n)$ regret, where $n$ denotes the number of episodes. Moreover, the regret has polynomial dependence on the number of steps, actions and states.
\item We perform experiments on FeedBAL and compare its performance with existing methods. 
\end{itemize}

\begin{figure*}[htb]
\begin{center}
\includegraphics[scale=0.50]{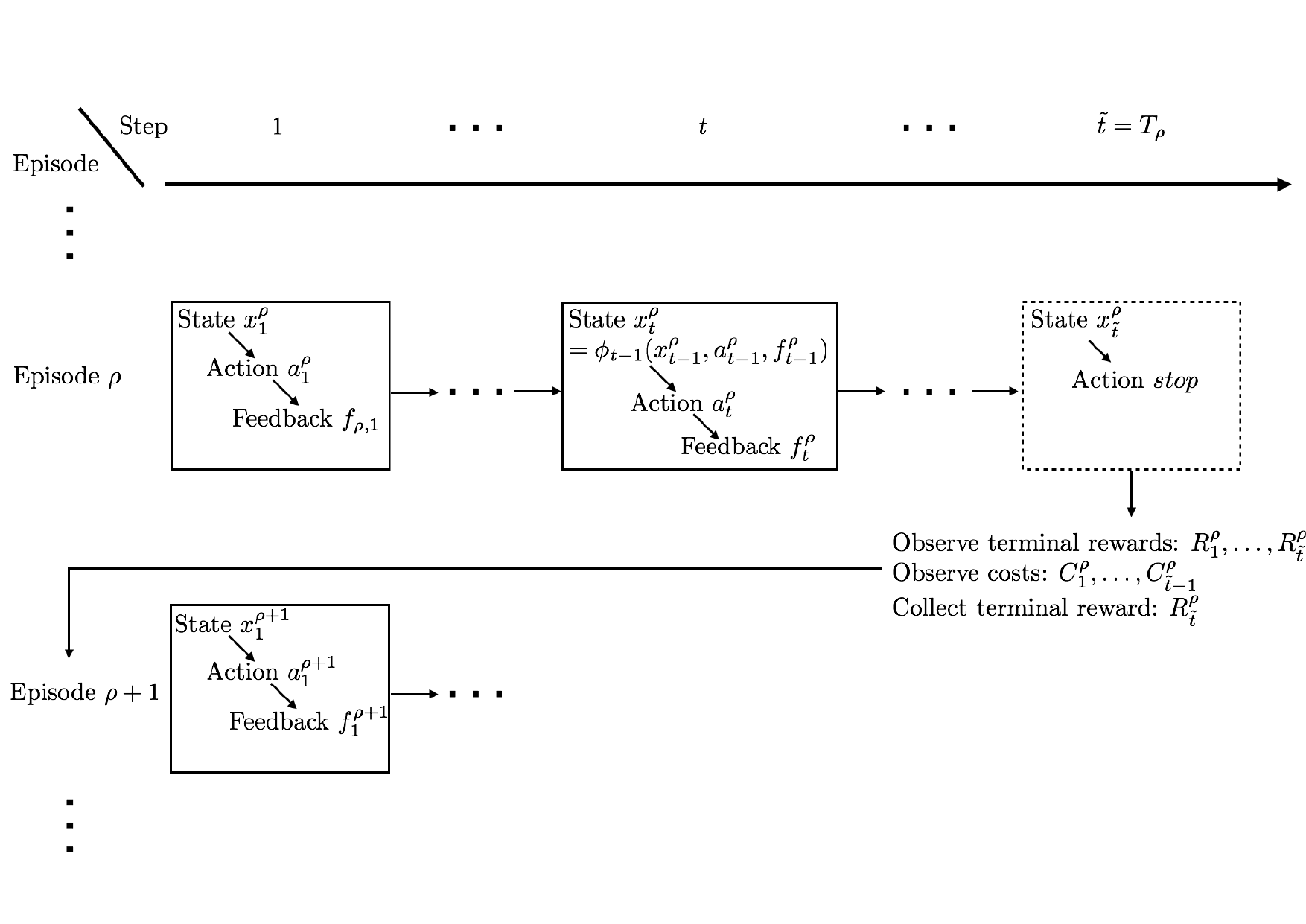}
\protect\caption{$x^{\rho}_{t}$ is the state observed, 
$a^{\rho}_{t}$ is the action selected and $f^{\rho}_{t}$ is the feedback observed in the step $t$ of episode $\rho$. $C^{\rho}_t$ is the cost of selecting action $a^{\rho}_t$ and $R^{\rho}_t$ is the terminal reward in step $t$ of episode $\rho$. $T_{\rho}$ is the step in which the learner selects the $stop$ action after which the costs and terminal rewards are revealed.}
\label{fig:mainfigure}
\vspace{-0.1in}
\end{center}
\end{figure*}

Rest of the paper is organized as follows. 
Problem formulation, and the definitions of the benchmark and the regret
are given in Section \ref{sec:Formalism_Algorithm_Analysis}.
The learning algorithm is introduced in Section \ref{sec:percel}. 
Regret analysis of this algorithm is provided in Section \ref{sec:regret}.
Illustrative results are given in Section \ref{sec:exp}. Related work and concluding remarks are given in Sections \ref{sec:related} and \ref{sec:conc} respectively. \revn{All proofs are given in the supplemental document.}

\section{Problem Formulation} \label{sec:Formalism_Algorithm_Analysis} 

\subsection{Notation}\label{sec:notation}

Sets are denoted by calligraphic letters, vectors are denoted by boldface letters and random variables are denoted by capital letters. For a set ${\cal E}$, $S_{{\cal E}} := |{\cal E}|$, where $|\cdot|$ denotes the cardinality.
The set of positive integers up to integer $t$ is denoted by $[t]$. $\mr{E}_{p}[\cdot]$ denotes the expectation with respect to probability distribution $p$. $\mr{I}( {\cal E} )$ denotes the indicator function of event ${\cal E}$ which is one if ${\cal E}$ is true and $0$ otherwise. For a set ${\cal E}$, $\Delta({\cal E} )$ denotes the set of probability distributions over ${\cal E}$.  All inequalities that involve random variables hold with probability one. 

\subsection{Problem Description}\label{sec:description}

The learner proceeds in episodes indexed by $\rho$. Each episode is composed of multiple steps indexed by $t$. Each step corresponds to a decision epoch in which the learner can choose an action from a finite set of actions denoted by $\bar{{\cal A}}$. There are two types of actions in $\bar{{\cal A}}$: (i) continuation actions which move the learner to the next step and allow it to acquire more information (feedback), (ii) a terminal action (also named as the $stop$ action) which ends the current episode and yields a terminal reward. 

The set of continuation actions is denoted by ${\cal A}$. The maximum number of steps in an episode is $l_{\max} < \infty$, which implies that the $stop$ action must be selected in at most $l_{\max}$ steps. After an action $a \in {\cal A}$ is selected in a step $t$, the learner observes a feedback $f \in {\cal F}$ before moving to the next step, where ${\cal F}$ denotes the set of all feedbacks.

Let $\bs{a}_{[t]} := (a_1, a_2, \ldots, a_t)$ denote a length $t$ sequence of continuation actions
and $\bs{f}_{[t]} := (f_1, f_2, \ldots, f_t)$ denote a length $t$ sequence of feedbacks. 
Let ${\cal A}^t := \prod_{i=1}^{t} {\cal A}$ denote the set of length $t$ sequences of continuation actions and ${\cal F}^t := \prod_{i=1}^{t} {\cal F}$ denote the set of length $t$ sequences of feedbacks. Set of all continuation action sequences is denoted by ${\cal A}^{\text{all}} := \bigcup_{t=1}^{l_{\max}-1} {\cal A}^t$ and the set of all feedback sequences is denoted by ${\cal F}^{\text{all}} := \bigcup_{t=1}^{l_{\max}-1} {\cal F}^t$.
At each step, the system is in one of the finitely many states, where the set of states is denoted by ${\cal X}$.

When action $a$ is chosen in step $t$, the feedback it generates depends on the state of the system in that step. Specifically, we assume that $f_t \sim p_{t,x,a} \in \Delta({\cal F})$, where $p_{t,x,a}$ denotes the probability distribution of the feedback given the step-state-action triplet $(t,x,a)$. 
Let $\phi_t: {\cal X} \times {\cal A} \times {\cal F} \rightarrow {\cal X}$ be the {\em state transition function} which encodes every state-action-feedback triplet to one of the states in ${\cal X}$.
Since the feedback is random, the next state is not a deterministic function of the previous state. Moreover, the state transition probabilities are step dependent.\footnote{Hence, our definition of state is more general than the definition of state used in reinforcement learning in MDPs \cite{tewari2008, auer2009near}, which is assumed to be time homogeneous.}

The expected cost of action $a$ in step $t$ when the state is $x$ is given by $c_{t,x,a} \in [0, c_{\max}]$ and
the expected terminal reward in step $t$ when the state is $x$ is given by $r_{t,x} \in [0, r_{\max}]$. The {\em ex-ante} terminal reward of the triplet $(t,x,a)$ is defined as 
$y_{t,x,a}
:= \mr{E} [r_{t+1, \phi_t(x,a,f_t)} ]$  
which gives the expected terminal reward of stopping at step $t+1$ after choosing action $a$ in step $t$ and before observing the feedback $f_t$.
For the $stop$ action the cost is always zero and
$y_{t,x,stop} = r_{t,x}, ~ \forall t \in [l_{\max}], ~\forall x \in {\cal X}$. 
The {\em gain} of an action $a \in \bar{{\cal A}}$ in step $t$ when the state is $x$ is defined as 
$g_{t,x,a} := y_{t,x,a} - c_{t,x,a}$.    

At each episode $\rho$, the learner chooses a sequence of actions $\bs{a}^{\rho} := (a^\rho_1, \ldots, a^\rho_{T_\rho})$, observes a sequence of feedbacks $\bs{f}^{\rho} := (f^\rho_1, \ldots, f^\rho_{T_\rho-1})$ and encounters a sequence of states $\bs{x}^{\rho} := (x^\rho_1, \ldots, x^\rho_{T_\rho})$, where $T_\rho$ denotes the step in which the $stop$ action is taken. Since no feedback is present in the first step, we set $x^\rho_1 = 0$.
After the $stop$ action is taken, the learner observes costs of the selected actions $C^\rho_{t} = c_{t,x^\rho_t,a^\rho_t} + \eta^\rho_t$ for $t \in [T_\rho-1]$ and the terminal rewards $R^\rho_{t} = r_{t,x^\rho_t} + \kappa^\rho_t$ for $t \in [T_\rho]$, where $\eta^\rho_t$ and $\kappa^{\rho}_t$ are independent $\sigma$-sub-Gaussian random variables that
are also independent from $x^{\rho}_{1:t}, a^{\rho}_{1:t}, f^{\rho}_{1:t}, \kappa^{\rho}_{1:t-1}, \eta^{\rho}_{1:t-1} $, i.e., $\forall \lambda \in \mathbb{R}$ and $\theta^{\rho}_t \in \{ \eta^\rho_t, \kappa^{\rho}_t \}$,
$\expect{ e^{\lambda \theta^{\rho}_t} } \leq \exp \left( \frac{\lambda^2 \sigma^2}{2} \right)$. 
When the episode is clear from the context, we will drop the superscripts from the expressions above. 

We assume that the learner knows the state transition function and can compute the state of the system at any step by using the actions taken and feedbacks observed in the previous steps. The learner does not know the feedback, cost and terminal reward distributions. The goal of the learner is to maximize its cumulative gain over the episodes by repeated interaction with the system. 

An important application of eMAB is medical screening, where screening high risk patients using multiple modalities may improve the chance of early detection and longer survival \cite{berg2008combined}. In this application, the actions correspond to medical screening tests such as mammogram, ultrasound and MRI, and feedbacks correspond to the BI-RADS scores from the administered tests. Based on this, the state can represent the likelihood of having cancer, which will change after each new screening test. The terminal reward can represent the reward of detection, missed detection or false alarm that results from the final assessment made after the $stop$ action is taken. Finally, the costs can represent the financial costs of administering the screening tests. 

 \vspace{-0.1in}
\subsection{The Benchmark} \label{sec:thebenchmark}
 
Since the number of possible action and feedback sequences is exponential in $l_{\max}$, it is very inefficient to learn the best action sequence by separately estimating the expected gain of each action sequence $\bs{a} \in {\cal A}^{\text{all}}$. In this section we propose a benchmark (given in Algorithm \ref{fig:BS}) whose action selection strategy can be learned quickly.

\begin{algorithm}[h]
{\fontsize{9}{12}\selectfont
\caption{The Benchmark} \label{fig:BS}
\begin{algorithmic}[1]

\REQUIRE ${\cal A}$, ${\cal X}$, $l_{\max}$

Initialize: $\rho = 1$

\WHILE{$\rho \geq 1$}
\STATE{$t=1$, $x_1 = 0$}
\WHILE{$t \in [l_{\max}]$}
\IF{$ (stop \in \argmax_{ a \in \bar{{\cal A}} }  g_{t,x_t,a}) ~ || ~  (t = l_{\max})$}
\STATE{$a^*_{t} = stop$, $T^*_{\rho} = t$ //BREAK}
\ELSE
\STATE{Select $a^*_{t}$ from $\argmax_{ a \in {\cal A} }  g_{t,x_t,a} $}
\ENDIF
\STATE{Observe feedback $f_t$}
\STATE{Set $x_{t+1} = \phi_t(x_t, a^*_t, f_t )$}
\STATE{$t = t+1$}
\ENDWHILE
\STATE{Observe the costs $C^*_t$, $t \in [T^*_\rho-1]$}
\STATE{Collect terminal reward $R_{T^*_{\rho}}$}
\STATE{$\rho = \rho+1$}
\ENDWHILE
\end{algorithmic}
}
\end{algorithm}

The benchmark\footnote{This benchmark is similar to the best first search algorithms for
graphs \cite{vempaty1991depth}. Moreover, it is shown that this benchmark
is approximately optimal for problems exhibiting adaptive submodularity
\cite{golovin2010adaptive}.} incrementally selects the next action based on the past sequence of feedbacks and actions. If the $stop$ action is not taken up to step $t$, the benchmark selects its action in step $t$ according to the following rule: Assume that the state in step $t$ is $x$. 
If $g_{t,x,stop} \geq g_{t,x,a}$ for all $a \in {\cal A}$ (which implies that $r_{t,x} \geq y_{t,x,a} - c_{t,x,a}$ for all $a \in {\cal A}$), then the benchmark selects the $stop$ action in step $t$.
Otherwise, it decides to continue for one more step by selecting one of the actions $a \in {\cal A}$ which maximizes $g_{t,x,a}$.

Let $\boldsymbol{a}^{*\rho} := (a^{*\rho}_{1}, \ldots, a^{*\rho}_{T^*_{\rho}} )$ be the action sequence selected, $\boldsymbol{x}^{*\rho} := (x^{*\rho}_{1}, \ldots, x^{*\rho}_{T^*_{\rho}} )$ be the state sequence, $\bs{C}^{*\rho} := (C^{*\rho}_1, \ldots, C^{*\rho}_{T^*_{\rho}-1})$ be the cost sequence observed,
and $R^{*\rho}_{T^{*\rho}}$ be the terminal reward collected by the benchmark in episode $\rho$, where $T^*_{\rho}$ is the step in which the $stop$ action is selected.
The cumulative expected {\em gain}, i.e., the expected terminal reward minus costs, of the benchmark in the first $n$ episodes is equal to 
\begin{align}
RW_{\textrm{B}}(n) := 
\expect*{ \sum_{\rho=1}^{n}  
\left( R^{*\rho}_{ T^*_{\rho} }  - \sum_{t=1}^{ T^*_{\rho}-1 } C^{*\rho}_t \right) } .  \notag
\end{align}

 \vspace{-0.1in}

Next, we evaluate the performance of the benchmark under two important special cases. Another important case, in which the benchmark is the optimal policy is given in Section \ref{sec:exp}.

\textbf{Approximate optimality of the benchmark in adaptive monotone submodular eMAB:}
Assume that an action state $s_a \in \{-1, 1 \}$ is associated with each action $a \in {\cal A}$, and the joint action state vector $\bs{s} = \{  s_a \}_{a \in {\cal A}} \in \{-1,1\}^{S_{ {\cal A}}}$ is sampled independently from some fixed distribution at the beginning of each episode. 
Consider a special case of eMAB in which: (i) the state $x$ is defined as a pair that consists of the set of actions selected so far and their action states, (ii) action selection costs are set to zero, i.e., $c_{t,x,a} = 0$, (iii) $l_{\max} \leq S_{ \bar{{\cal A}} }$, (iv) if an action is selected in step $t$ it cannot be selected in the future steps, and (v) $r_{t,x}$ is an adaptive submodular function of $x$. These assumptions reduce our problem to the adaptive submodular maximization problem \cite{golovin2010adaptive, gabillon2013adaptive}, where our benchmark is $1-1/e$ approximately optimal (for details see the supplemental document). 

\comment{
In \cite{gabillon2013adaptive}, the adaptive submodular function to be maximized is given as $h: 2^{\cal A} \times \{-1,1\}^{S_{ {\cal A}}} \rightarrow \mathbb{R}$, where $2^{\cal A}$ denotes the power set of ${\cal A}$. 
The feedback observed after selecting an action is the state of that action.
Based on this, the set of observations is defined as ${\cal Y} := \{-1,0,1\}^{S_{ {\cal A}}}$. For an observation vector $\bs{y} \in {\cal Y}$, $y_a = 0$ implies that action $a$ is not selected, and hence, its state is not observed, while $y_a = i$, $i \in \{-1,1\}$ implies that action $a$ is selected and its state is observed as $i$. Let $\text{dom}(\bs{y})$ denote the set and $l(\bs{y})$ denote the number of actions whose states are observed according to observation vector $\bs{y}$.
They define the greedy policy for maximizing $h$ as $\pi^g$, such that given an observation vector $\bs{y}$, it selects the action
\begin{align}
&\pi^g(\bs{y}) = \argmax_{a \in {\cal A} - \text{dom}(\bs{y}) } \notag \\
&\mr{E}_{ \bs{s} | \bs{y} } [ h( \text{dom}(\bs{y}) \cup \{ a \} , \bs{s} ) 
- h( \text{dom}(\bs{y}) , \bs{s} )   ] . \label{eqn:golovin}
\end{align}
where the expectation is taken over the conditional distribution of $\bs{s}$ given $\bs{y}$. 
By linearity of conditional expectation \eqref{eqn:golovin} can be re-written as
\begin{align}
\mr{E}_{ \bs{s} | \bs{y} } [ h( \text{dom}(\bs{y}) \cup \{ a \} , \bs{s} ) ]
- \mr{E}_{ \bs{s} | \bs{y} } [ h( \text{dom}(\bs{y}) , \bs{s} )  ]  .    \notag
\end{align}
Note that the second term in the above equation does not depend on the choice of $a \in {\cal A} - \text{dom}(\bs{y}) $. Hence, $\pi^g$ can equivalently be defined as 
\begin{align}
&\pi^g(\bs{y}) = \argmax_{a \in {\cal A} - \text{dom}(\bs{y}) } 
\mr{E}_{ \bs{s} | \bs{y} } [ h( \text{dom}(\bs{y}) \cup \{ a \} , \bs{s} ) 
 ] . \label{eqn:golovin2}      
\end{align}

For a given feedback sequence $\bs{f}$, let $\bs{y}(\bs{f})$ be the observation vector that corresponds to $\bs{f}$. If $\bs{f}$ includes the feedback for action $a$, then $y_a(\bs{f})$ corresponds to this feedback, which is in $\{-1,1\}$. Otherwise, $y_a(\bs{f}) = 0$. Also, for an observation vector $\bs{y}$, let $\bs{s}(\bs{y})$ denote the states of actions in $\text{dom}(\bs{y}) \subset {\cal A}$. 
It is natural to assume in the setting of \cite{gabillon2013adaptive} that $h( \text{dom}(\bs{y}) \cup \{ a \} , \bs{s} ) $ only depends on the states of the actions in $\text{dom}(\bs{y}) \cup \{ a \}$. In \cite{gabillon2013adaptive}, an example of this is given for the maximum coverage problem. Moreover, it is assumed that the state of each action is drawn independently of the other actions. When the assumptions above hold \eqref{eqn:golovin2} becomes
\begin{align}
&\pi^g(\bs{y}) = \argmax_{a \in {\cal A} - \text{dom}(\bs{y}) } 
\mr{E}_{ s_a } [ h( \text{dom}(\bs{y}) \cup \{ a \} , (\bs{s}(\bs{y}),  s_a ) ) ] . \label{eqn:golovin3}      
\end{align}

Let $t = S_{\text{dom}(\bs{y})} +1$ and ${\cal A}_t = \text{dom}(\bs{y})$ be the set of actions selected in the first $t$ steps. The above definition is equivalent to our benchmark if we define the state as the pair $(\text{dom}(\bs{y}), \bs{s}(\bs{y}))$.  Then, the ex-ante terminal reward of action $a \in {\cal A} - \text{dom}(\bs{y})$ becomes
\begin{align}
y_{t, (\text{dom}(\bs{y}), \bs{s}(\bs{y})), a} = E_{ s_{a} } 
[ r_{t+1, (\text{dom}(\bs{y}) \cup \{ a \}, (\bs{s}( \bs{y}), s_a)  )  } ]      \notag
\end{align}
where
\begin{align}
r_{S_{{\cal E}}+1, ({\cal E}, \bs{s}( {\cal E} )) } = h( {\cal E}, \bs{s}( {\cal E} ) ) .   \notag
\end{align}

It is shown in \cite{golovin2010adaptive} that the greedy policy is guaranteed to obtain at least $1-1/e$ of the expected reward of the optimal policy. Now consider our benchmark in this setting. Since it is known that $c_{t,x,a} = 0$, our benchmark will only stop after all actions in ${\cal A}$ are selected once. Therefore, our benchmark is $1-1/e$ approximately optimal for this special case. 
}

\textbf{Optimality of the benchmark and its performance against the best fixed sequence of actions:}
Here, we show that the benchmark can perform much better than the best fixed action sequence that ends with the $stop$ action that is not adapted based on the observed feedbacks. 
For this, let $l_{\max}=3$, ${\cal X} = \{0,1,2\}$, ${\cal F} = \{0, 1\}$, ${\cal A} = \{ a_0, a_1 \}$, $c_{t,x,a}=1$, $\forall t \in [2]$, $\forall x \in {\cal X}$ and $\forall a \in {\cal A}$, $\phi_t(x,a,f) = x + f$, $r_{t,x} = t^2 \text{I}(x=\text{odd})$ and $p_{t,x,a} = \text{Ber}(q_{x,a})$, where $q_{x,a}$ is the parameter of the Bernoulli distribution. Assume that $q_{0,a_0} = q_{1,a_0} =1$ and $q_{0,a_1} = q_{1,a_1} =0$. Clearly, the terminal reward function does not exhibit diminishing returns property, and hence, is not adaptive submodular in this case.
The fixed action sequences in this case are $stop$, $(a_0,stop)$, $(a_1,stop)$, $(a_0,a_0,stop)$ and $(a_1,a_1,stop)$. It is easy to check that the best fixed action sequence is $(a_0, stop)$, whose cumulative gain is $3$. On the other hand, the benchmark will select the sequence $(a_0,a_1,stop)$, which yields a cumulative gain of $7$. Moreover, in this case, $(a_0,a_1,stop)$ is the optimal action sequence.

\vspace{-0.1in}
\subsection{Definition of the Regret} 

The (pseudo) regret of a learning algorithm which selects the action sequence $\boldsymbol{a}^\rho$ and observes the feedback sequence $\boldsymbol{f}^\rho$ in episode $\rho$ with respect to the benchmark in the first $n$ episodes is given by
\begin{align}
R(n) &:=     \left( \sum_{\rho=1}^n \left( r_{T^*_{\rho}, x^{*\rho}_{T^*_{\rho}} } - 
\sum_{t=1}^{T^*_{\rho} - 1} c_{t, x^{*\rho}_t, a^{*\rho}_t }  \right) \right) \notag \\ 
&- \left( \sum_{\rho=1}^n \left( r_{T_{\rho}, x^{\rho}_{T_{\rho}} } - 
\sum_{t=1}^{T_{\rho} - 1} c_{t, x^{\rho}_t, a^{\rho}_t } \right)   \right) . \label{eqn:pseudoregret}
\end{align}
When we take expectation of \eqref{eqn:pseudoregret} over all sources of randomness, we obtain the expected regret, which is equivalent to
\begin{align}
\hspace{-0.1in} \mathrm{E} [R(n)] &=
RW_{B}(n)  
-  \expect*{ \sum_{\rho=1}^{n} \left( R^{\rho}_{T_{\rho}}  - \sum_{t = 1}^{T_{\rho}-1} C^{\rho}_t \right) } . \label{eqn:regretdef}
\end{align}
Any algorithm whose expected regret increases at most sublinearly, i.e., $\mathrm{E} [R(n)] = O(n^\gamma)$, $0<\gamma<1$, in the number of episodes will converge in terms of the average reward to the average reward of the benchmark as $n \rightarrow \infty$. In the next section we propose an algorithm whose expected regret increases only logarithmically in the number of episodes and polynomially in the number of steps.

\section{A Learning Algorithm for eMAB} \label{sec:percel} 

In this section we propose {\em Feedback Adaptive Learning} (FeedBAL) (pseudocode given in Figure \ref{fig:FAL}), which learns the sequence of actions to select based on the observed feedbacks to the actions taken in previous steps of an episode (as shown in Figure \ref{fig:mainfigure}). In order to minimize the regret given in (\ref{eqn:regretdef}), FeedBAL balances exploration and exploitation when selecting the actions. 

FeedBAL keeps the sample mean estimates $\hat{g}^{\rho}_{t,x,a}$ of the gains $g^{\rho}_{t,x,a}$ of the actions $a \in \bar{{\cal A}}$ and the sample mean estimates $\hat{r}^{\rho}_{t,x}$ of the terminal rewards $r^{\rho}_{t,x}$ for all step-state pairs $(t,x)$. Using the definition of the gain for the $stop$ action it sets
$\hat{g}^{\rho}_{t,x,stop} = \hat{r}^{\rho}_{t,x}$ for all $(t,x)$.
In addition to these, FeedBAL also keeps the following counters: 
$N^{\rho}_{t,x}$ which counts the number of times step-state pair $(t,x)$ is observed\footnote{We say that a step-state pair $(t,x)$ is observed in episode $\rho$ if the state is $x$ at step $t$ of episode $\rho$.} prior to episode $\rho$, and
$N^{\rho}_{t,x,a}$ which counts the number of times action $a \in \bar{{\cal A}}$ is selected after step-state pair $(t,x)$ is observed prior to episode $\rho$.

\begin{algorithm} [h]
{\fontsize{9}{12}\selectfont
\caption{FeedBack Adaptive Learning (FeedBAL)}\label{fig:FAL}
\begin{algorithmic}[1]

\REQUIRE ${\cal A}$, ${\cal X}$, $l_{\max}$, $\sigma$, $\delta$

Initialize counters: $N_{t,x}=0$, $N_{t,x,a}=0$, $\forall t \in [ l_{\max} ], ~ \forall x \in {\cal X}, ~ \forall a \in {\cal A}$, and $\rho=1$. 

Initialize estimates: $\hat{r}_{t,x} =0$, $\hat{g}_{t,x,a}=0$, $\forall t \in [ l_{\max} ], ~ \forall x \in {\cal X}, ~ \forall a \in {\cal A}$.

\WHILE{$\rho \geq 1$}
\STATE{$t=1$, $x_1 = 0$}
\WHILE{$t \in [ l_{\max} ]$}
 \STATE{Calculate UCBs: $u_{t,x_t,a} = \hat{g}_{t,x_t,a} + \text{conf}_{t,x_t,a}$, $\forall a \in \bar{{\cal A}}$, where $\text{conf}_{t,x_t,a}$ is given in \eqref{eqn:confterm1} and \eqref{eqn:confterm2} 
}
\IF{$(stop \in \argmax_{a \in \bar{{\cal A}}} u_{t,x_t,a}) ~ || ~ (t = l_{\max})$}
\STATE{$a_t = stop$, $T_{\rho} = t$ // BREAK}
\ELSE
\STATE{Select $a_t$ from $\argmax_{a \in {\cal A}} u_{t,x_t,a}$}
\ENDIF
\STATE{Observe feedback $f_t$}
\STATE{Set $x_{t+1} = \phi_t (x_t, a_t, f_t)$}
\STATE{$t = t+1$}
\ENDWHILE
\STATE{Observe the costs $C^{\rho}_t$, $t \in [T_\rho-1]$ and the terminal rewards $R^{\rho}_t$, $t \in [T_\rho]$}
\STATE{Collect terminal reward $R^{\rho}_{T_{\rho}}$}
\STATE{Update:\\
(i) $\hat{g}_{t,x,stop} = \hat{r}_{t,x}  =
\frac{N_{t,x} \hat{r}_{t,x} + R^{\rho}_t \mr{I}(x_t = x)}
{N_{t,x}+ \mr{I}(x_t = x)}$, for $t \in [ T_\rho ]$ and $x \in {\cal X}$ \\ (ii) $N_{t,x}  = N_{t,x} + \mr{I}(x_t = x)$
for $t \in [ T_\rho ]$ and $x \in {\cal X}$; \\(iii) $\hat{g}_{t,x,a} =
\frac{N_{t,x,a} \hat{g}_{t,x,a} + (R^{\rho}_{t+1} - C^{\rho}_t) \mr{I}(x_t = x, a_t =a) }
{N_{t,x,a}+\mr{I}(x_t = x, a_t =a) }$ for $t \in [T_\rho - 1]$, $x \in {\cal X}$ and $a \in {\cal A}$; \\(iv) $N_{t,x,a} = N_{t,x,a} + \mr{I}(x_t = x, a_t =a) $ for $t \in [T_\rho - 1]$, $x \in {\cal X}$ and $a \in {\cal A}$
}
\STATE{$\rho=\rho+1$}
\ENDWHILE
\end{algorithmic}
}
\end{algorithm}

Next, we explain the operation of FeedBAL. Consider step $t$ of episode $\rho$. If FeedBAL has not selected the $stop$ action yet, using its knowledge of the state $x^\rho_t$, it calculates the following upper confidence bounds (UCBs):
$u^{\rho}_{t,x^\rho_t,a} := \hat{g}^{\rho}_{t,x^\rho_t,a}+ \text{conf}^{\rho}_{t,x^\rho_t,a}$
for the actions in $\bar{{\cal A}}$, where $\text{conf}^{\rho}_{t,x^\rho_t,a}$ denotes the {\em confidence number} for the triplet $(t,x,a)$, which is given as
\begin{align}
& \text{conf}^{\rho}_{t,x^\rho_t,a} \notag \\
&\hspace{-0.1in} =  \hspace{-0.1in} \sqrt{ \frac{(1 + N^{\rho}_{t,x^\rho_t,a})}{ (N^{\rho}_{t,x^\rho_t,a} )^2} 
\left( 4 \sigma^2 \log \left( \frac{K (1 + N^{\rho}_{t,x^\rho_t,a})^{1/2} }{\delta} \right)   \right) } \label{eqn:confterm1}
\end{align}
for $a \in {\cal A}$ and 
\begin{align}
 & \text{conf}^{\rho}_{t,x^\rho_t,stop} \notag \\
 &=   \sqrt{ \frac{(1 + N^{\rho}_{t,x^\rho_t})}{ (N^{\rho}_{t,x^\rho_t} )^2} 
 \left( 4 \sigma^2 \log \left( \frac{ K (1 + N^{\rho}_{t,x^\rho_t})^{1/2} }{\delta} \right)   \right) } \label{eqn:confterm2}  
\end{align}
where $K = l_{\max} S_{ {\cal X} } S_{ \bar{{\cal A}} }$.
If $stop \in \argmax_{a \in \bar{{\cal A}}} u^{\rho}_{t,x^\rho_t,a}$, then FeedBAL selects the $stop$ action in step $t$. Otherwise, FeedBAL selects one of the actions in ${\cal A}$ with the maximum UCB, i.e., 
$a_t \in \argmax_{a \in {\cal A}} u^{\rho}_{t,x^\rho_t,a}$.     
After selecting the action in step $t$, FeedBAL observes the feedback $f^\rho_t \sim p_{t,x^\rho_t,a_t}$, which is then used to calculate the next state as 
$x^{\rho}_{t+1} = \phi_t(x^{\rho}_{t}, a_t, f^\rho_t)$.    

This procedure repeats until FeedBAL takes the $stop$ action, which will eventually happen since the number of steps is bounded by $l_{\max}$. This way the length of the sequence of selected actions is adapted based on the sequence of received feedbacks and costs of taking the actions. 
After episode $\rho$ ends, FeedBAL observes the costs $C^{\rho}_t$, $ t \in [T_{\rho} - 1]$ and the terminal rewards $R^{\rho}_t$, $ t \in [T_{\rho}]$. 
Finally, using these values, FeedBAL updates the values of the sample mean gains and the counters before episode $\rho+1$ starts (line 16 of FeedBAL), and
reaches its objective of maximizing the expected cumulative gain by capturing the tradeoff between the rewards and the costs of selecting actions. 

\section{Performance Bounds for FeedBAL} \label{sec:regret}

We bound the regret of FeedBAL by bounding the number of times that it will take an action that is different from the action selected by the benchmark. 

Let $g^*_{t,x} = \max_{a \in \bar{{\cal A}}} g_{t,x,a}$ be the gain of the best action and $\Delta_{t,x,a} = g^*_{t,x} - g_{t,x,a}$ be the suboptimality gap of action $a$ for the step-state pair $(t,x)$. The set of optimal actions for step-state pair $(t,x)$ is given by ${\cal O}_{t,x} := \{ a \in \bar{{\cal A}}: \Delta_{t,x,a} =0  \}$. We impose the following assumption in the rest of this section.
%
\begin{assumption}\label{ass:ongains}
For any step-state pair $(t,x)$: (i) $stop \in {\cal O}_{t,x} \Rightarrow  |{\cal O}_{t,x}| =1$, (ii) $|{\cal O}_{t,x}| > 1 \Rightarrow {\cal O}_{t,x} \subset {\cal A}$.
\end{assumption}

Assumption \ref{ass:ongains} implies that ${\cal O}_{t,x}$ cannot include both the $stop$ action and another action in ${\cal A}$. This assumption is required for our regret analysis.
If ${\cal O}_{t,x}$ includes both the $stop$ action and another action in ${\cal A}$, then any learning algorithm may incur linear regret. The reason for this is that the benchmark will always choose the $stop$ action in this case, whereas the learner may take the other action more than it takes the $stop$ action due to the fluctuations of the sample mean gains around their expected values. To circumvent this effect, the learner can add a small positive bias $\epsilon > 0$ to the gain of the $stop$ action. If this bias is small enough such that he $stop$ action remains suboptimal for any step-state pair $(t,x)$ in which the $stop$ action was suboptimal, then our regret analysis can also be applied to the case when Assumption \ref{ass:ongains} is violated. 

Let 
\begin{align}
{\cal E}_{\text{conf}} :=& \left\{ | \hat{g}^{\rho}_{t,x,a} - g_{t,x,a} | \leq c^{\rho}_{t,x,a} \right. \notag \\
&  \left. ~~~~~~  \forall \rho \geq 2, ~ \forall t \in [l_{\max}], ~ \forall  x \in {\cal X}, ~\forall a \in \bar{{\cal A}} \right\} \notag
\end{align}
be the event that the sample mean gains are within $c^{\rho}_{t,x,a}$ of the expected gains. 
The following lemma bounds the probability that ${\cal E}_{\text{conf}}$ happens. 

\begin{lemma} \label{lemma:probabilitybound}
$\Pr({\cal E}_{\text{conf}}) \geq 1 - \delta$.
\end{lemma}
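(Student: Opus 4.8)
The plan is a union bound over stage-state-action triplets combined with a time-uniform (anytime) self-normalized sub-Gaussian concentration inequality. There are exactly $K = l_{\max} S_{\mathcal{X}} S_{\bar{\mathcal{A}}}$ triplets $(t,x,a)$, so it suffices to show that for each fixed triplet the failure event $\{\exists\, \rho \ge 2 : |\hat g^{\rho}_{t,x,a} - g_{t,x,a}| > \mathrm{conf}^{\rho}_{t,x,a}\}$ has probability at most $\delta/K$; a union bound over the $K$ triplets then gives $\Pr(\mathcal{E}_{\mathrm{conf}}^{c}) \le \delta$, i.e. $\Pr(\mathcal{E}_{\mathrm{conf}}) \ge 1-\delta$. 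Whenever the relevant counter ($N^{\rho}_{t,x,a}$ for $a \in \mathcal{A}$, or $N^{\rho}_{t,x}$ for $a = stop$) is zero, $\mathrm{conf}^{\rho}_{t,x,a} = +\infty$ by \eqref{eqn:confterm1}--\eqref{eqn:confterm2} and there is nothing to prove, so only rounds with a positive counter matter.

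Next I would set up the martingale for a fixed triplet with $a \in \mathcal{A}$ (the $a = stop$ case is the same argument with $N_{t,x}$ in place of $N_{t,x,a}$ and $R^{\rho}_t$ in place of $R^{\rho}_{t+1} - C^{\rho}_t$, and is in fact simpler, the per-round noise then being just $\kappa^{\rho}_t$). Let $\mathcal{H}_{\rho}$ be the $\sigma$-algebra generated by all randomness of rounds $1,\dots,\rho-1$ together with $(x^{\rho}_{1:t}, a^{\rho}_{1:t}, f^{\rho}_{1:t-1})$ --- everything up to the instant \emph{before} $f^{\rho}_t$ is drawn. Since FAL picks $a^{\rho}_t$ from the UCBs, which are functions of $x^{\rho}_t$ and the counters, the visit event $\{x^{\rho}_t = x,\, a^{\rho}_t = a\}$ is $\mathcal{H}_{\rho}$-measurable, and so is $N^{\rho+1}_{t,x,a}$. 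Set
\begin{align}
D_{\rho} := \bigl( R^{\rho}_{t+1} - C^{\rho}_t - g_{t,x,a} \bigr)\,\mathrm{I}(x^{\rho}_t = x,\, a^{\rho}_t = a), \notag
\end{align}
and note that on the visit event $R^{\rho}_{t+1} - C^{\rho}_t - g_{t,x,a} = (r_{t+1,\phi_t(x,a,f^{\rho}_t)} - y_{t,x,a}) + \kappa^{\rho}_{t+1} - \eta^{\rho}_t$. Using the definition of $y_{t,x,a}$ and Assumption~\ref{ass:subgaussian} one checks that $\mathrm{E}[D_{\rho} \mid \mathcal{H}_{\rho}] = 0$, so $\{D_{\rho}\}$ is a martingale difference sequence for $\{\mathcal{H}_{\rho}\}$, and that conditionally on $\mathcal{H}_{\rho}$ each $D_{\rho}$ is sub-Gaussian: it is either $0$ or a sum of three conditionally independent, mean-zero parts, namely the bounded term $r_{t+1,\phi_t(x,a,f^{\rho}_t)} - y_{t,x,a}$ (sub-Gaussian by Hoeffding's lemma since $r \in [0, r_{\max}]$) and the $\sigma$-sub-Gaussian noises $\kappa^{\rho}_{t+1}$ and $-\eta^{\rho}_t$; hence $D_{\rho}$ is sub-Gaussian with a parameter determined by $\sigma$ and $r_{\max}$, which I denote $\sigma$ below as in the confidence terms. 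Writing $S_m := \sum_{\rho=1}^{m} D_{\rho}$, the number of nonzero increments among the first $m$ rounds is $N^{m+1}_{t,x,a}$ and $S_m = N^{m+1}_{t,x,a}(\hat g^{m+1}_{t,x,a} - g_{t,x,a})$.

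Then I would apply the method of mixtures to $S_m$. For each $\lambda \in \mathbb{R}$, $M_m(\lambda) := \exp(\lambda S_m - \tfrac{1}{2}\lambda^2 \sigma^2 N^{m+1}_{t,x,a})$ is a nonnegative supermartingale with $M_0(\lambda) = 1$ (here one uses that the counter increment equals the visit indicator together with the conditional sub-Gaussianity of $D_m$); mixing against a zero-mean Gaussian density on $\lambda$ of variance $1/\sigma^2$ and performing the Gaussian integral yields the supermartingale $\bar M_m = (1 + N^{m+1}_{t,x,a})^{-1/2}\exp\bigl( S_m^2 / (2\sigma^2(1+N^{m+1}_{t,x,a})) \bigr)$ with $\mathrm{E}[\bar M_0] \le 1$. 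Ville's maximal inequality at level $K/\delta$ then gives that, with probability at least $1 - \delta/K$, for all $m \ge 0$ simultaneously,
\begin{align}
|S_m| \le \sqrt{ 2\sigma^2 \bigl(1 + N^{m+1}_{t,x,a}\bigr) \log\!\Bigl( \tfrac{ K (1 + N^{m+1}_{t,x,a})^{1/2}}{\delta} \Bigr) } . \notag
\end{align}
On the event $N^{m+1}_{t,x,a} > 0$, dividing by $N^{m+1}_{t,x,a}$ gives $|\hat g^{m+1}_{t,x,a} - g_{t,x,a}| \le \mathrm{conf}^{m+1}_{t,x,a}$, with a spare factor of $2$ since \eqref{eqn:confterm1} carries $4\sigma^2$ rather than $2\sigma^2$; setting $\rho = m+1$ and using the zero-counter convention covers all $\rho \ge 2$. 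A union bound over the $K$ triplets finishes the proof.

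\textbf{Main obstacle.} The crux is the anytime step. A crude union bound over the counter values $n = 1,2,\dots$ does not work: with the stated radius the per-$n$ failure probability is of order $(1+n)^{-1}$, whose sum diverges. One genuinely needs a maximal-inequality argument --- the method of mixtures above, or an equivalent ``stitching''/peeling argument over dyadic ranges of the counter --- and it is precisely this that produces the $(1+N)/N^2$ factor and the $\log(K(1+N)^{1/2}/\delta)$ term in \eqref{eqn:confterm1}--\eqref{eqn:confterm2}. A secondary point to handle carefully is the filtration: conditioning just before $f^{\rho}_t$ keeps the visit indicator (hence the counter) measurable while leaving the reward fluctuation $r_{t+1,\phi_t(x,a,f^{\rho}_t)} - y_{t,x,a}$ centered, and one must verify that this bounded fluctuation together with $\kappa^{\rho}_{t+1}$ and $\eta^{\rho}_t$ is sub-Gaussian with the parameter appearing in the confidence numbers.
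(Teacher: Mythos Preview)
Your proposal is correct and follows essentially the same route as the paper: a union bound over the $K=l_{\max}S_{\mathcal X}S_{\bar{\mathcal A}}$ triplets, with the per-triplet anytime bound obtained from a self-normalized sub-Gaussian concentration inequality (the paper cites Theorem~1 of \cite{abbasi2011improved}, which is precisely the method-of-mixtures argument you sketch). One point worth flagging: you carefully decompose the increment as $(r_{t+1,\phi_t(x,a,f^{\rho}_t)}-y_{t,x,a})+\kappa^{\rho}_{t+1}-\eta^{\rho}_t$ and treat the first term as a bounded, centered fluctuation; the paper's Appendix~\ref{app:tripletbound} instead writes $R^{\rho}_{t+1}-C^{\rho}_t=g_{t,x^{\rho}_t,a^{\rho}_t}+\kappa^{\rho}_{t+1}-\eta^{\rho}_t$, effectively absorbing the feedback randomness into the noise and arriving at a $\sqrt{2}\,\sigma$-sub-Gaussian innovation, which yields the $4\sigma^2$ in \eqref{eqn:confterm1}. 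Your observation that a contribution from $r_{\max}$ may be needed in the sub-Gaussian parameter is well taken and is a detail the paper leaves implicit.
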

\comment{
\begin{proof}
Fix any step-state-action triplet $(t,x,a)$. 
Let 
\begin{align}
{\cal E}_{\text{conf}}(t,x,a) :=  \left\{ | \hat{g}^{\rho}_{t,x,a} - g_{t,x,a} | \leq c_{t,x,a} ~ \forall \rho \geq 1  \right\}      \notag
\end{align}
By replacing $\delta$ term in \eqref{eqn:stopbound} and \eqref{eqn:sbound5} given in Appendix \ref{app:tripletbound} with $\delta/( l_{\max} S_{ {\cal X} } S_{ \bar{{\cal A}} } )$, we get
$\Pr( {\cal E}_{\text{conf}}(t,x,a) ) \geq 1 - \delta/( l_{\max} S_{ {\cal X} } S_{ \bar{{\cal A}} })$ (details can be found in Appendix \ref{app:tripletbound}). This implies that $\Pr( {\cal E}^c_{\text{conf}}(t,x,a) ) \leq \delta/( l_{\max} S_{ {\cal X} } S_{ \bar{{\cal A}} } )$ for all $t \in [l_{\max}]$, $x \in {\cal X}$ and $a \in \bar{ {\cal A} }$. 
 Using a union bound, we get 
\begin{align}
\Pr({\cal E}^c_{\text{conf}}) 
&= \Pr \left( \bigcup_{t \in [l_{\max}]} \bigcup_{x \in {\cal X}} \bigcup_{a \in \bar{{\cal A}}}  {\cal E}^c_{\text{conf}}(t,x,a) \right)       \notag \\
&\leq \sum_{t \in [l_{\max}]} \sum_{x \in {\cal X}} \sum_{a \in \bar{{\cal A}}} \Pr( {\cal E}^c_{\text{conf}}(t,x,a) ) \notag \\
& \leq \delta. \notag 
\end{align}
\end{proof}
}

 The next lemma upper bounds the number of times each action can be selected on event ${\cal E}_{\text{conf}}$. 
 \begin{lemma} \label{lemma:actioncountbound}
On event ${\cal E}_{\text{conf}}$ we have
 \begin{align}
N^{\rho}_{t,x,a} \leq 3 + \frac{16 \sigma^2}{\Delta^2_{t,x,a}} \log( \frac{16 \sigma^2 K} { \Delta^2_{t,x,a} \delta   }  )  ~~   & \forall \rho \geq 1, ~\forall  t \in [l_{\max}], \notag \\ 
& \forall x \in {\cal X}, ~\forall a \in \bar{{\cal A}} .     \notag
 \end{align}
 \end{lemma}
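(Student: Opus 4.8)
The plan is the standard optimism-based counting argument, carried out on the high-probability event ${\cal E}_{\text{conf}}$ of Lemma~\ref{lemma:probabilitybound}. Fix a stage-state-action triplet $(t,x,a)$; if $a\in{\cal O}_{t,x}$ then $\Delta_{t,x,a}=0$ and the claimed bound is vacuous, so assume $\Delta_{t,x,a}>0$. The first step is to show that \emph{whenever FAL plays $a$ at the pair $(t,x)$ in a round $\rho$, its confidence number satisfies $\text{conf}^{\rho}_{t,x,a}\ge\Delta_{t,x,a}/2$}. Indeed, by inspection of FAL the action it plays always attains $\max_{a'\in\bar{{\cal A}}}u^{\rho}_{t,x,a'}$ (when $stop$ is not in the argmax, the argmax over ${\cal A}$ coincides with the argmax over $\bar{{\cal A}}$), so if $a$ is played then $u^{\rho}_{t,x,a}\ge u^{\rho}_{t,x,a'}$ for every $a'\in\bar{{\cal A}}$. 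Under ${\cal E}_{\text{conf}}$ we have $u^{\rho}_{t,x,a'}=\hat g^{\rho}_{t,x,a'}+\text{conf}^{\rho}_{t,x,a'}\ge g_{t,x,a'}$ for all $a'$ (the $\text{conf}^{\rho}_{t,x,a'}$ of \eqref{eqn:confterm1}--\eqref{eqn:confterm2} being exactly the $c^{\rho}_{t,x,a'}$ in the definition of ${\cal E}_{\text{conf}}$). Using Assumption~\ref{ass:ongains} there is an optimal action $a^{\star}$, and in either case $u^{\rho}_{t,x,a}\ge g^{*}_{t,x}$: if $a^{\star}\in{\cal A}$ then $u^{\rho}_{t,x,a}\ge u^{\rho}_{t,x,a^{\star}}\ge g_{t,x,a^{\star}}=g^{*}_{t,x}$, and if $a^{\star}=stop$ (so ${\cal O}_{t,x}=\{stop\}$ by Assumption~\ref{ass:ongains}(i)) then $u^{\rho}_{t,x,a}\ge u^{\rho}_{t,x,stop}\ge g_{t,x,stop}=g^{*}_{t,x}$. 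Combining with the other half of ${\cal E}_{\text{conf}}$, namely $u^{\rho}_{t,x,a}=\hat g^{\rho}_{t,x,a}+\text{conf}^{\rho}_{t,x,a}\le g_{t,x,a}+2\,\text{conf}^{\rho}_{t,x,a}$, gives $2\,\text{conf}^{\rho}_{t,x,a}\ge g^{*}_{t,x}-g_{t,x,a}=\Delta_{t,x,a}$.

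The second step inverts the confidence term. Put $M^{\rho}_{t,x,a}:=N^{\rho}_{t,x,a}$ for $a\in{\cal A}$ and $M^{\rho}_{t,x,stop}:=N^{\rho}_{t,x}$, so that $\text{conf}^{\rho}_{t,x,a}$ is the function $\text{conf}(m):=\sqrt{\tfrac{1+m}{m^{2}}\,4\sigma^{2}\log\!\big(\tfrac{K\sqrt{1+m}}{\delta}\big)}$ (with $\text{conf}(0):=+\infty$, which forces the first play of each action) evaluated at $m=M^{\rho}_{t,x,a}$, and this function is decreasing in $m$ for $m\ge 1$. Note also $N^{\rho}_{t,x,a}\le M^{\rho}_{t,x,a}$ in all cases (equality for $a\in{\cal A}$; for $stop$ the pair $(t,x)$ is observed at least as often as $stop$ is played from it). The inequality from step one thus reads $\text{conf}(M^{\rho}_{t,x,a})\ge\Delta_{t,x,a}/2$, and a direct computation — bounding $\tfrac{1+m}{m^{2}}$ and controlling the $\tfrac12\log(1+m)$ term inside the logarithm — shows that $\text{conf}(m)\ge\Delta_{t,x,a}/2$ implies $m\le 2+\tfrac{16\sigma^{2}}{\Delta_{t,x,a}^{2}}\log\!\big(\tfrac{16\sigma^{2}K}{\Delta_{t,x,a}^{2}\delta}\big)$; hence $N^{\rho}_{t,x,a}\le M^{\rho}_{t,x,a}\le 2+\tfrac{16\sigma^{2}}{\Delta_{t,x,a}^{2}}\log\!\big(\tfrac{16\sigma^{2}K}{\Delta_{t,x,a}^{2}\delta}\big)$ in every round in which $a$ is played at $(t,x)$.

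Finally, assembling the pieces: since $N^{\rho}_{t,x,a}$ increases by one exactly on such a play and is nondecreasing otherwise, its value just after the last play of $a$ at $(t,x)$ — and therefore in every round $\rho\ge 1$ — is at most $3+\tfrac{16\sigma^{2}}{\Delta_{t,x,a}^{2}}\log\!\big(\tfrac{16\sigma^{2}K}{\Delta_{t,x,a}^{2}\delta}\big)$ (trivially true if $a$ is never played at $(t,x)$, and the additive $3$ also absorbs the forced first play and the $m=0$ case). I expect the main obstacle to be the inversion step: because the logarithm contains $\sqrt{1+m}$, the inequality $\text{conf}(m)\ge\Delta/2$ is transcendental in $m$, so one must carry out the estimate carefully — via the usual self-bounding trick for UCB-type arguments — to land on the stated coefficient $16\sigma^{2}/\Delta^{2}$ rather than a larger constant, with the additive slack terms absorbing the lower-order corrections.
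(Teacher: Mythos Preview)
Your proposal is correct and follows essentially the same approach as the paper's proof: the paper also reduces to showing $\text{conf}^{\rho}_{t,x,a}\ge\Delta_{t,x,a}/2$ at each play, then inverts the confidence formula (via Lemma~8 of \cite{antos2010active}, which is precisely the ``self-bounding trick'' you allude to, with $a=\Delta_{t,x,a}^{2}/(16\sigma^{2})$ and $b=\log(16\sigma^{2}/\Delta_{t,x,a}^{2})$). The only cosmetic difference is in the $stop$ case: where you bound $N^{\rho}_{t,x,stop}\le M^{\rho}_{t,x,stop}=N^{\rho}_{t,x}$ and invert the confidence directly, the paper instead introduces an auxiliary $\overline{\text{conf}}^{\rho}_{t,x,stop}$ written in terms of $N^{\rho}_{t,x,stop}$, observes $\text{conf}^{\rho}_{t,x,stop}\le\overline{\text{conf}}^{\rho}_{t,x,stop}$, and inverts that --- the two routes are equivalent.
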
 
\comment{
 \begin{proof}
For $\rho=1$, the result is trivial. For $\rho > 1$, the proof proceeds in a way that is similar to the proof of Lemma 6 in \cite{abbasi2011improved}. Firstly, assume that action $a \in \bar{{\cal A}}$ is selected in step $t \in [T_{\rho}]$ of episode $\rho$ when the state is $x$.
Since 
\begin{align}
\hat{g}^{\rho}_{t,x,a} &\in [ g_{t,x,a} - \text{conf}^{\rho}_{t,x,a},  g_{t,x,a} + \text{conf}^{\rho}_{t,x,a}  ] \notag \\ 
 \hat{g}^{\rho}_{t,x,a^*} &\in [ g^*_{t,x} - \text{conf}^{\rho}_{t,x,a^*},  g^*_{t,x} + \text{conf}^{\rho}_{t,x,a^*}  ], ~~ a^* \in {\cal O}_{t,x} \notag
\end{align}
on event ${\cal E}_{\text{conf}}$, using
\begin{align}
\hat{g}^{\rho}_{t,x,a} + \text{conf}^{\rho}_{t,x,a} & \geq  g^*_{t,x}  \label{eqn:confi1} \\
\hat{g}^{\rho}_{t,x,a}  & \leq g_{t,x,a} + \text{conf}^{\rho}_{t,x,a} \label{eqn:confi2}
\end{align}
and the definition of $\Delta_{t,x,a}$, we obtain $\text{conf}^{\rho}_{t,x,a} \geq \Delta_{t,x,a} / 2$. Substituting the values in \eqref{eqn:confterm1} and \eqref{eqn:confterm2} into $\text{conf}^{\rho}_{t,x,a}$ and using the fact that $(z^2 - 1)/(z+1) \leq z^2/(z+1)$ for positive integers $z$, we get for $a \in {\cal A}$
\begin{align}
\frac{ (N^{\rho}_{t,x,a})^2 - 1 }{ N^{\rho}_{t,x,a} +1  } 
\leq \frac{4}{\Delta^2_{t,x,a}}
 \left( 4 \sigma^2 \log \left( \frac{ K (1 + N^{\rho}_{t,x,a})^{1/2} }{\delta} \right)   \right) \label{eqn:trickterm}
\end{align}

Finally, assume that the $s := stop$ action is selected in step $t =T_{\rho}$ of episode $\rho$ when the state is $x$. Let 
\begin{align}
\overline{\text{conf}}^{\rho}_{t,x,s} =  \sqrt{ \frac{(1 + N^{\rho}_{t,x,s})}{ (N^{\rho}_{t,x,s} )^2} 
\left( 4 \sigma^2 \log \left( \frac{ K (1 + N^{\rho}_{t,x,s})^{1/2} }{\delta} \right)   \right) }  .  \notag
\end{align}
Since $N^{\rho}_{t,x,s} \leq N^{\rho}_{t,x}$, we have $\text{conf}^{\rho}_{t,x,s} \leq \overline{\text{conf}}^{\rho}_{t,x,s}$, which implies that on event ${\cal E}_{\text{conf}}$
\begin{align}
\hat{g}^{\rho}_{t,x,a} &\in [ g_{t,x,a} - \overline{\text{conf}}^{\rho}_{t,x,s},  g_{t,x,a} + \overline{\text{conf}}^{\rho}_{t,x,s} ]      \notag \\
 \hat{g}^{\rho}_{t,x,a^*} &\in [ g^*_{t,x} - \text{conf}^{\rho}_{t,x,a^*},  g^*_{t,x} + \text{conf}^{\rho}_{t,x,a^*}  ], ~~ a^* \in {\cal O}_{t,x} \notag
\end{align}
Using
\begin{align}
\hat{g}^{\rho}_{t,x,s} + \overline{\text{conf}^{\rho}_{t,x,s}} & \geq  g^*_{t,x}  \notag \\
\hat{g}^{\rho}_{t,x,s}  & \leq g_{t,x,s} + \overline{\text{conf}^{\rho}_{t,x,a}} \notag
\end{align}
and the definition of $\Delta_{t,x,a}$, we obtain $\overline{\text{conf}^{\rho}_{t,x,s}} \geq \Delta_{t,x,s} / 2$. This implies that \eqref{eqn:trickterm} also holds for $a = stop$.

Next, we use the a lemma from \cite{antos2010active} to bound \eqref{eqn:trickterm}, which also also given in Appendix \ref{app:antos}. 
From \eqref{eqn:trickterm} we obtain
\begin{align}
N^{\rho}_{t,x,a} \leq 1 + \frac{16\sigma^2}{\Delta^2_{t,x,a}} \log \left( \frac{K}{\delta} \right)  
+ \frac{8\sigma^2}{\Delta^2_{t,x,a}} \log (1 + N^{\rho}_{t,x,a} )   \label{eqn:trickterm2}
\end{align}
Since $1 + N^{\rho}_{t,x,a}   \geq 1$, we substitute $a= \Delta^2_{t,x,a} / (16 \sigma^2 )$ and $b = \log (   16 \sigma^2 / \Delta^2_{t,x,a})$ in Appendix \ref{app:antos} to get the bound
\begin{align}
\log (1 + N^{\rho}_{t,x,a} ) \leq a (1 + N^{\rho}_{t,x,a} ) + b   \notag
\end{align}
The result is obtained by substituting this into \eqref{eqn:trickterm2}.
\end{proof}
}

 As a corollary of Lemma \ref{lemma:actioncountbound} we derive the following bound on the confidence of the actions selected by FeedBAL.
 \begin{corollary} \label{cor:confidence}
 With probability at least $1-\delta$
 \begin{align}
\forall \rho \geq 2, \forall t \in [l_{\max}]   
~~ g^*_{t,x^{\rho}_t}  - g_{t,x^{\rho}_t, a^{\rho}_t}  \leq 2 \text{conf}^{\rho}_{t,x^{\rho}_t,a^{\rho}_t} .\notag 
 \end{align}
 \end{corollary}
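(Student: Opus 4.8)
The plan is to prove the bound deterministically on the event ${\cal E}_{\text{conf}}$, which by Lemma \ref{lemma:probabilitybound} occurs with probability at least $1-\delta$; the ``with probability at least $1-\delta$'' statement in the corollary then follows immediately. So I would fix a realization in ${\cal E}_{\text{conf}}$, a round $\rho \geq 2$, and a stage $t \in [l_{\max}]$ that FAL actually reaches (i.e. $t \leq T_\rho$), write $x = x^{\rho}_t$ and $a = a^{\rho}_t$ for brevity, and pick any optimal action $a^* \in {\cal O}_{t,x}$, so that $g_{t,x,a^*} = g^*_{t,x}$.

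The key observation I would establish first is that FAL's two-branch selection rule always returns an action that maximizes the UCB $u^{\rho}_{t,x,\cdot}$ over the whole set $\bar{{\cal A}}$ whenever $t < l_{\max}$: if FAL selects $stop$, this holds by the definition of the first branch; if it selects a continuation action, then the guard of the first branch failed, so $u^{\rho}_{t,x,stop}$ is strictly below $\max_{a' \in \bar{{\cal A}}} u^{\rho}_{t,x,a'}$, and hence a maximizer of $u^{\rho}_{t,x,\cdot}$ over ${\cal A}$ (which is what FAL returns) is also a maximizer over $\bar{{\cal A}}$. In either case $u^{\rho}_{t,x,a} \geq u^{\rho}_{t,x,a^*}$. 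The degenerate stage $t = l_{\max}$ is handled separately and trivially, since there both FAL and the benchmark are forced to stop and $stop$ is the only admissible action, so $g^*_{t,x} = g_{t,x,stop}$.

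From there the argument mirrors the first part of the proof of Lemma \ref{lemma:actioncountbound}. On ${\cal E}_{\text{conf}}$ the ``optimism'' inequality gives $g^*_{t,x} = g_{t,x,a^*} \leq \hat{g}^{\rho}_{t,x,a^*} + \text{conf}^{\rho}_{t,x,a^*} = u^{\rho}_{t,x,a^*}$, while the other side of the confidence interval for the selected action gives $\hat{g}^{\rho}_{t,x,a} \leq g_{t,x,a} + \text{conf}^{\rho}_{t,x,a}$. Chaining these with $u^{\rho}_{t,x,a} \geq u^{\rho}_{t,x,a^*}$ yields
\[
g^*_{t,x} \;\leq\; u^{\rho}_{t,x,a} \;=\; \hat{g}^{\rho}_{t,x,a} + \text{conf}^{\rho}_{t,x,a} \;\leq\; g_{t,x,a} + 2\,\text{conf}^{\rho}_{t,x,a},
\]
and rearranging gives $g^*_{t,x^{\rho}_t} - g_{t,x^{\rho}_t,a^{\rho}_t} \leq 2\,\text{conf}^{\rho}_{t,x^{\rho}_t,a^{\rho}_t}$, which is the claim.

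The only points I would be careful about are bookkeeping ones rather than genuine obstacles: I must make sure that the confidence radius appearing in the definition of ${\cal E}_{\text{conf}}$ for the $stop$ action is exactly the quantity $\text{conf}^{\rho}_{t,x,stop}$ of \eqref{eqn:confterm2} (the one built from $N^{\rho}_{t,x}$), so that the same chain goes through verbatim when $a^{\rho}_t = stop$ and one does not need the slightly larger radius $\overline{\text{conf}}^{\rho}_{t,x,stop}$ used inside the proof of Lemma \ref{lemma:actioncountbound}; and I should remark that when a relevant counter is zero the corresponding confidence term is $+\infty$, so the inequality is vacuously true. It is also worth noting that Assumption \ref{ass:ongains} is not used here — it is only needed later for the regret bounds. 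Overall I expect no real difficulty: this is just the ``confidence implies near-greedy'' step of a UCB analysis, extracted and specialized to the action FAL actually selects.
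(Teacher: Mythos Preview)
Your proposal is correct and follows essentially the same approach as the paper: the paper's proof is just the one-liner ``apply \eqref{eqn:confi1} and \eqref{eqn:confi2} on event ${\cal E}_{\text{conf}}$,'' and your argument is precisely the unpacking of those two inequalities together with the UCB selection rule. Your write-up is in fact more careful than the paper's --- you spell out why FAL's two-branch rule still yields a maximizer over all of $\bar{{\cal A}}$, and you treat the forced-stop stage $t = l_{\max}$ explicitly, points the paper glosses over.
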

 \comment{
 \begin{proof}
The result follows by a simple application of \eqref{eqn:confi1} and \eqref{eqn:confi2} on event ${\cal E}_{\text{conf}}$.
 \end{proof}
}
 
Corollary \ref{cor:confidence} bounds the suboptimality of the action selected by FeedBAL in any step of any episode by $2 \text{conf}^{\rho}_{t,x^{\rho}_t,a^{\rho}_t}$, which only depends on quantities $\delta$, $K$, $\sigma^2$ and $N^{\rho}_{t,x^{\rho}_t, a^{\rho}_t}$, which are known by the learner at the time $a^{\rho}_t$ is selected. 

Consider any algorithm that deviates from the benchmark for the first time in step-state pair $(t,x)$ by choosing action $a$ that is different from the action that will be chosen by the benchmark at $(t,x)$. 
Let $\mu^*_{t,x}$ be the maximum expected gain that can be acquired by the benchmark starting from step-state pair $(t,x)$.\footnote{In calculating $\mu^*_{t,x}$, we assume that in steps in which the benchmark needs to randomize between at least two actions, the action that maximizes the expected reward of the benchmark is selected.} Let $\underline{\mu}_{t,x,a}$ be the minimum expected gain that can be acquired by any algorithm by choosing the worst-sequence of actions starting from step-state pair $(t,x)$ after chosing action $a$. We define the {\em deviation gap} in step-state pair $(t,x)$ as $\Omega_{t,x,a} := \mu^*_{t,x} - \underline{\mu}_{t,x,a}$. The following theorem show that the regret of FeedBAL is bounded with probability at least $1-\delta$.   

\begin{theorem} \label{thm:boundedregret}
With probability at least $1-\delta$, the regret of FeedBAL given in \eqref{eqn:pseudoregret} is bounded by
\begin{align}
R(n) \leq \sum_{t =1}^{l_{\max}} \sum_{x \in {\cal X}} \sum_{a \notin {\cal O}_{t,x} } \Omega_{t,x,a}
\left( 3 + \frac{16 \sigma^2}{\Delta^2_{t,x,a}} \log( \frac{16 \sigma^2 K} { \Delta^2_{t,x,a} \delta   }  )  \right) \notag
\end{align}
\end{theorem}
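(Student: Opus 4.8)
The plan is to work on the high-probability event ${\cal E}_{\text{conf}}$ and, in every round, to charge all of that round's regret to the \emph{first} stage in which FAL departs from the benchmark. By Lemma~\ref{lemma:probabilitybound}, ${\cal E}_{\text{conf}}$ holds with probability at least $1-\delta$, so it suffices to establish the stated inequality on ${\cal E}_{\text{conf}}$.

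Write $R(n) = \sum_{\rho=1}^{n}(G^*_\rho - G_\rho)$, where $G^*_\rho := r_{T^*_{\rho}, x^{*\rho}_{T^*_{\rho}}} - \sum_{t=1}^{T^*_{\rho}-1} c_{t, x^{*\rho}_t, a^{*\rho}_t}$ and $G_\rho := r_{T_{\rho}, x^{\rho}_{T_{\rho}}} - \sum_{t=1}^{T_{\rho}-1} c_{t, x^{\rho}_t, a^{\rho}_t}$ are the gains accrued by the benchmark and by FAL in round $\rho$. For each $\rho$, let $\tau_\rho$ be the first stage at which FAL plays a suboptimal action, i.e.\ $a^{\rho}_{\tau_\rho}\notin{\cal O}_{\tau_\rho, x^{\rho}_{\tau_\rho}}$, and write $(\tau_\rho,\xi_\rho,\alpha_\rho) := (\tau_\rho, x^{\rho}_{\tau_\rho}, a^{\rho}_{\tau_\rho})$; if no such stage exists, the round contributes at most zero to $R(n)$, since FAL then picks a gain-maximizing action at every visited stage-state pair, which is exactly the benchmark's decision rule, so it realizes an optimal policy just like the benchmark. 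For any other round I would argue $G^*_\rho - G_\rho \le \Omega_{\tau_\rho,\xi_\rho,\alpha_\rho}$: up to stage $\tau_\rho-1$ FAL again behaves like the benchmark, so no regret is accrued before $\tau_\rho$; at the stage-state pair $(\tau_\rho,\xi_\rho)$ the benchmark's best achievable expected continuation gain is $\mu^*_{\tau_\rho,\xi_\rho}$, whereas FAL, having committed to $\alpha_\rho$ there, is guaranteed at least $\underline{\mu}_{\tau_\rho,\xi_\rho,\alpha_\rho}$ no matter what it does afterwards, and the difference is precisely the deviation gap $\Omega_{\tau_\rho,\xi_\rho,\alpha_\rho}$.

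To finish, group the rounds by their first deviation:
\begin{align}
R(n) \le \sum_{t=1}^{l_{\max}}\sum_{x\in{\cal X}}\sum_{a\notin{\cal O}_{t,x}} \Omega_{t,x,a}\,\bigl|\{\rho\le n : (\tau_\rho,\xi_\rho,\alpha_\rho)=(t,x,a)\}\bigr|. \notag
\end{align}
A round counted in the inner cardinality is one in which FAL selected $a$ immediately after visiting $(t,x)$; since a fixed stage-state pair $(t,x)$ is visited at most once per round, this cardinality is at most $N^{n+1}_{t,x,a}$, the total number of times FAL selects $a$ after observing $(t,x)$ in the first $n$ rounds. Because $a\notin{\cal O}_{t,x}$ forces $\Delta_{t,x,a}>0$, Lemma~\ref{lemma:actioncountbound} applied with $\rho=n+1$ bounds $N^{n+1}_{t,x,a}$ by $3 + \frac{16\sigma^2}{\Delta^2_{t,x,a}}\log\bigl(\frac{16\sigma^2 K}{\Delta^2_{t,x,a}\delta}\bigr)$ on ${\cal E}_{\text{conf}}$, and substituting this bound yields the theorem.

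The step I expect to be the main obstacle is the per-round inequality $G^*_\rho - G_\rho \le \Omega_{\tau_\rho,\xi_\rho,\alpha_\rho}$ (of which the matching-round case is the boundary instance). It needs (a) a careful argument that the pre-deviation part of a round produces no regret, which rests on FAL and the benchmark following the same decision rule there together with the fact that $r_{t,x}$ and $c_{t,x,a}$ are themselves expectations, so that the relevant gains telescope through the benchmark's Bellman recursion; and (b) the observation that the crude bound $\underline{\mu}_{\tau_\rho,\xi_\rho,\alpha_\rho}$ remains valid even though FAL may deviate again at later stages of the round --- which is exactly why $\underline{\mu}$ is defined as a minimum over \emph{all} subsequent action sequences, so later deviations are already absorbed into it. Assumption~\ref{ass:ongains} enters here to make ``matching the benchmark'' unambiguous, since it rules out ${\cal O}_{t,x}$ simultaneously containing $stop$ and a continuation action.
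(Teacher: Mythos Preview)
Your proposal is correct and follows the same route as the paper: work on ${\cal E}_{\text{conf}}$, charge each round's regret to a suboptimal action selection, and sum the count bound of Lemma~\ref{lemma:actioncountbound} over all triplets $(t,x,a)$ with $a\notin{\cal O}_{t,x}$, weighted by $\Omega_{t,x,a}$. The paper's own proof is a single sentence (``directly follows by summing the result of Lemma~\ref{lemma:actioncountbound}''); your first-deviation decomposition and the bound $G^*_\rho-G_\rho\le\Omega_{\tau_\rho,\xi_\rho,\alpha_\rho}$ are exactly the details one must supply to turn that sentence into an argument, and your discussion of where Assumption~\ref{ass:ongains} and the worst-case definition of $\underline{\mu}$ are used is accurate.
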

\comment{
\begin{proof}
The proof directly follows by summing the result of Lemma \ref{lemma:actioncountbound} among all step-state-action triplets $(t,x,a)$.
\end{proof}
}

The bound given in Theorem \ref{thm:boundedregret} does not depend on $n$. As given in the following theorem, this bound can be easily converted to a bound on the expected regret by setting $\delta = 1/n$.

\begin{theorem} \label{thm:logregret}
When FeedBAL is run with $\delta = 1/n$, its expected regret given in \eqref{eqn:regretdef} is bounded by 
\begin{align}
&\expect{ R(n) }  \leq  \Omega_{\max}   \notag \\
&+ \sum_{t =1}^{l_{\max}} \sum_{x \in {\cal X}} \sum_{a \notin {\cal O}_{t,x} } \Omega_{t,x,a}
\left( 3 + \frac{16 \sigma^2}{\Delta^2_{t,x,a}} \log( \frac{16 \sigma^2 K n} { \Delta^2_{t,x,a}   } )  \right) \notag
\end{align}
where $\Omega_{\max} = \max_{t,x,a} \Omega(t,x,a)$.
\end{theorem}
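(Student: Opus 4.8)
The plan is to condition on the high-probability event ${\cal E}_{\text{conf}}$ of Lemma~\ref{lemma:probabilitybound} and to run FAL with $\delta = 1/n$, so that $\Pr({\cal E}^c_{\text{conf}}) \le 1/n$. Splitting
\begin{align}
\expect{R(n)} &= \expect{R(n)\,\mr{I}({\cal E}_{\text{conf}})} + \expect{R(n)\,\mr{I}({\cal E}^c_{\text{conf}})}, \notag
\end{align}
I would bound the first term by the right-hand side of Theorem~\ref{thm:boundedregret} with $\delta$ replaced by $1/n$, and the second term by $\Omega_{\max}$; adding these gives exactly the claimed inequality.

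For the first term, on ${\cal E}_{\text{conf}}$ the estimate underlying Theorem~\ref{thm:boundedregret} (obtained by summing Lemma~\ref{lemma:actioncountbound} over all triplets) reads $R(n) \le \sum_{t}\sum_{x}\sum_{a\notin{\cal O}_{t,x}}\Omega_{t,x,a}\bigl(3 + (16\sigma^2/\Delta^2_{t,x,a})\log(16\sigma^2 K/(\Delta^2_{t,x,a}\delta))\bigr)$; substituting $\delta = 1/n$ turns the logarithmic argument into $16\sigma^2 Kn/\Delta^2_{t,x,a}$. Because each $\Omega_{t,x,a} \ge 0$ and each bracketed factor is positive, this upper bound is a nonnegative constant, so $\expect{R(n)\,\mr{I}({\cal E}_{\text{conf}})}$ is at most the same expression (the integrand is at most that constant on ${\cal E}_{\text{conf}}$ and is killed by the indicator off it).

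For the second term, the key step is to show that the pseudo-regret collected in any single round is at most $\Omega_{\max} = \max_{t,x,a}\Omega_{t,x,a}$, so that $R(n) \le n\,\Omega_{\max}$ always. Within a round both the benchmark and FAL start in state $x_1 = \emptyset$; coupling the two runs so that they observe identical feedbacks while they select identical actions, let $(t,x)$ be the first stage-state pair at which FAL picks an action $a$ different from the benchmark's. The stages before $t$ add nothing to that round's pseudo-regret since the two policies traverse the same state-action pairs there, while from stage $t$ on the benchmark's remaining gain is at most $\mu^*_{t,x}$ and FAL's is at least $\underline{\mu}_{t,x,a}$, so the round contributes at most $\mu^*_{t,x} - \underline{\mu}_{t,x,a} = \Omega_{t,x,a} \le \Omega_{\max}$ (and $0$ if the runs never diverge). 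Then Lemma~\ref{lemma:probabilitybound} with $\delta = 1/n$ gives
\begin{align}
\expect{R(n)\,\mr{I}({\cal E}^c_{\text{conf}})} &\le n\,\Omega_{\max}\,\Pr({\cal E}^c_{\text{conf}}) \le n\,\Omega_{\max}\cdot\tfrac{1}{n} = \Omega_{\max}. \notag
\end{align}

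I expect the per-round bound to be the main obstacle: $\mu^*_{t,x}$ and $\underline{\mu}_{t,x,a}$ are phrased via expected gains whereas the per-round term in \eqref{eqn:pseudoregret} is evaluated along the realized state trajectory, so the argument must ensure that FAL's realized continuation after the first divergence is one of the (adaptive) continuations for which $\underline{\mu}_{t,x,a}$ is a valid lower bound, and that the benchmark's tie-breaking matches the one fixing $\mu^*_{t,x}$ in its defining footnote. If one wishes to sidestep this, the same decomposition with the cruder estimate $R(n) \le n\,(r_{\max} + (l_{\max}-1)c_{\max})$ — valid because each round's gain lies in $[-(l_{\max}-1)c_{\max},\, r_{\max}]$ — proves the statement with $r_{\max} + (l_{\max}-1)c_{\max}$ substituted for $\Omega_{\max}$.
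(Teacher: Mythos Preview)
Your proposal is correct and follows exactly the paper's approach: decompose $\expect{R(n)}$ via the event ${\cal E}_{\text{conf}}$, apply Theorem~\ref{thm:boundedregret} on the good part, and use $R(n)\le n\,\Omega_{\max}$ together with $\Pr({\cal E}^c_{\text{conf}})\le 1/n$ on the bad part. The paper's own proof is three sentences and simply asserts the bound $R(n)\le n\,\Omega_{\max}$ on the complementary event without the per-round coupling argument you supply; the subtlety you flag about $\mu^*_{t,x}$ and $\underline{\mu}_{t,x,a}$ being expectations while \eqref{eqn:pseudoregret} is realized is real and is glossed over in the original just as it is in the proof of Theorem~\ref{thm:boundedregret}, so your discussion is more careful than the paper itself.
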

\comment{
\begin{proof}
Consider Theorem \ref{thm:boundedregret}. With probability $\delta$, the regret is bounded above by $n \Omega_{\max}$. With probability $1-\delta$, the regret is bounded by this theorem's main statement. The result is obtained by applying the law of total expectation.
\end{proof}
}

Theorem \ref{thm:logregret} shows that the expected regret of FeedBAL is $O(\log n)$. Although the constant terms given in Theorems \ref{thm:boundedregret} and \ref{thm:logregret} depend on unknown parameters $\Delta_{t,x,a}$ and $\Omega_{t,x,a}$, FeedBAL does not require the knowledge of these parameters to run and to calculate its confidence bounds. From the expressions in Theorems \ref{thm:boundedregret} and \ref{thm:logregret}, it is observed that the regret scales linearly with $\Omega_{t,x,a} / \Delta^2_{t,x,a}$, which is a term that indicates the {\em hardness} of the problem. If the suboptimality gap $\Delta^2_{t,x,a}$ is small, FeedBAL makes more errors by choosing $a \notin {\cal O}_{t,x}$ when it tries to follow the benchmark. This results in a loss in the expected gain that is bounded by $\Omega_{t,x,a}$. 

Next, we consider problems in which deviations from the benchmark in early steps cost more than deviations from the benchmark at later steps. 
\begin{assumption}\label{ass:earlydeviations}
$\Omega_{t,x,a} \leq (l_{\max} - t) \Delta_{t,x,a}$ for all $t \in [l_{\max}]$, $x \in {\cal X}$, $a \notin {\cal O}_{t,x}$. 
\end{assumption}
Using this assumption, the following result is derived for the expected regret of FeedBAL.

\begin{corollary} \label{cor:diminishinglosses}
When Assumption 2 holds, and FeedBAL is run with $\delta = 1 /n$, we have
\begin{align}
& \expect{ R(n) }  \leq  \Omega_{\max}   \notag \\
&+ l_{\max} \sum_{t =1}^{l_{\max}}  \sum_{x \in {\cal X}} \sum_{a \notin {\cal O}_{t,x} } 
\left( 3 \Delta_{t,x,a} + \frac{16 \sigma^2}{\Delta_{t,x,a}} \log( \frac{16 \sigma^2 K n} { \Delta^2_{t,x,a}   } )  \right) \notag
\end{align}
\end{corollary}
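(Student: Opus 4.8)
The plan is to obtain Corollary \ref{cor:diminishinglosses} as an immediate specialization of the logarithmic regret bound in Theorem \ref{thm:logregret}: insert the structural inequality of Assumption \ref{ass:earlydeviations} (the hypothesis written as ``Assumption 1'' in the statement is this early-deviation condition, $\Omega_{t,x,a} \leq (l_{\max}-t)\Delta_{t,x,a}$) and then perform the sum over the stage index.

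First I would write down, from Theorem \ref{thm:logregret} run with $\delta = 1/n$, the bound
\begin{align}
\expect{ R(n) } \leq \Omega_{\max} + \sum_{t=1}^{l_{\max}} \sum_{x \in {\cal X}} \sum_{a \notin {\cal O}_{t,x}} \Omega_{t,x,a} \left( 3 + \frac{16 \sigma^2}{\Delta^2_{t,x,a}} \log\!\left( \frac{16 \sigma^2 K n}{\Delta^2_{t,x,a}} \right) \right) . \notag
\end{align}
Next, in each summand I would replace $\Omega_{t,x,a}$ by its upper bound $(l_{\max}-t)\Delta_{t,x,a}$ furnished by Assumption \ref{ass:earlydeviations}. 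Distributing the factor $\Delta_{t,x,a}$ through the parenthesis and using $\Delta_{t,x,a}/\Delta^2_{t,x,a} = 1/\Delta_{t,x,a}$ turns the generic summand into
\begin{align}
(l_{\max}-t) \left( 3 \Delta_{t,x,a} + \frac{16 \sigma^2}{\Delta_{t,x,a}} \log\!\left( \frac{16 \sigma^2 K n}{\Delta^2_{t,x,a}} \right) \right) , \notag
\end{align}
which is precisely the bracketed expression appearing in the corollary, weighted by $(l_{\max}-t)$.

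Finally I would carry out the summation over $t$. For each fixed pair $(x,a)$ the inner $t$-sum is at most the arithmetic-series factor times the least favorable stage, with $\sum_{t=1}^{l_{\max}}(l_{\max}-t) = \sum_{j=0}^{l_{\max}-1} j = \frac{l_{\max}(l_{\max}-1)}{2} = \frac{l_{\max}^2 - l_{\max}}{2}$, which is exactly the constant prefactor in the statement; I would then pull this constant outside the remaining sums over $x$ and $a \notin {\cal O}_{t,x}$, and adding back $\Omega_{\max}$ yields the claimed inequality.

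I do not expect a genuine obstacle here, since the result is a one-step corollary of Theorem \ref{thm:logregret}. The only points requiring a little care are: (i) that the logarithm argument $16\sigma^2 K n / \Delta^2_{t,x,a}$ is at least $1$, so that every quantity being manipulated stays nonnegative — this is inherited from the regime in which Lemma \ref{lemma:actioncountbound}, and hence Theorem \ref{thm:logregret}, are stated; and (ii) reconciling the slightly loose notation of the statement, in which $\Delta_{t,x,a}$ still carries a stage index although the explicit sum over $t$ has been folded into the constant $\frac{l_{\max}^2 - l_{\max}}{2}$ — reading the remaining double sum as taking, for each $(x,a)$, the worst stage makes the displayed inequality hold exactly.
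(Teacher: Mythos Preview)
Your derivation is correct and is exactly the intended route: the paper does not spell out a proof for this corollary, merely stating that it follows from Theorem~\ref{thm:logregret} together with Assumption~\ref{ass:earlydeviations}, and your substitution $\Omega_{t,x,a}\le(l_{\max}-t)\Delta_{t,x,a}$ followed by $\sum_{t=1}^{l_{\max}}(l_{\max}-t)=\tfrac{l_{\max}^2-l_{\max}}{2}$ is precisely that computation. Your two cautionary remarks are also apt: the hypothesis labeled ``Assumption~1'' in the statement is indeed the early-deviation condition (Assumption~\ref{ass:earlydeviations}), and the residual $t$-index on $\Delta_{t,x,a}$ and ${\cal O}_{t,x}$ after the stage sum has been collapsed is a notational looseness in the paper that your worst-stage reading resolves.
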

Although, the regret bound of FeedBAL increases polynomially in the size of the state-space, for many interesting applications of eMAB, the state-space is small. For instance, consider the breast cancer treatment example in \citet{pardalos2009handbook}. In this example, ${\cal X}$ has only four states: no cancer, in suti cancer, invasive ductal carcinoma, dead.\footnote{The reward assigned to state ``dead" can be $0$, and to state ``no cancer" can be $1$.} ${\cal A}$ is the set of treatment options, and ${\cal F}$ is the feedback set, which can be the reduction in tumor size given a particular treatment in a particular state. 

\begin{remark}
FeedBAL adaptively learns the expected gains of action and feedback sequences that correspond to stopping at various steps. Although our model allows at most $l_{\max}$ actions to be taken in each episode, the actual number of actions taken may be much lower than this value depending on the expected costs $c_{t,x,a}$. High costs implies a decrease in the marginal benefit of continuation, which implies that the benchmark may take the stop action earlier than the case when costs are low. 
\end{remark}
\begin{remark}
The state-space model we proposed is very general, and as we stated in Section III-B, includes the adaptive monotone submodular problem \cite{golovin2010adaptive, gabillon2013adaptive} as a special case. The state-space model of eMAB generalizes these problems in a way that the distribution of feedback given the action also depends on the state of the system. 
\end{remark}

\section{Illustrative Example}\label{sec:exp}

\textbf{Actions, feedbacks, states, rewards and costs}

We consider a game where the learner aims at collecting resources to maximize its payoff, where the payoff depends both on the number of collected resources and the duration of resource collection. Let $X_{t}$ denote the binary-valued random variable which takes value $1$ if resource is present in step $t$ and $0$ otherwise. At the beginning of each episode $X_t$ is drawn from a Bernoulli distribution with parameter $p_t$ independently from the other steps. $[X_1,\ldots,X_{l_{\max}-1}]^T$ represents the resource vector. We assume that $p_t > p_{t+1}$, $\forall t \in [l_{\max}-2]$ to model a decaying resource generation rate. 

The learner has only two actions: $cont$ and $stop$. When the learner takes $cont$ action in step $t$, it moves to the next step, observes as feedback $X_{t}$ and pays cost $\eta_t$, where the expected cost is $0$ and $\eta_t$ is zero mean Gaussian noise with variance \revn{$\sigma^2_c$}. As usual, $stop$ ends the episode. The state space is ${\cal X} = \{0,\ldots,l_{\max}-1\}$, and the state at step $t$ is $N_t$, which is the number of resources collected by the beginning of step $t$. Thus, $\phi(t,x,cont) = x + X_{t}$.
The benefit that the learner obtains from collected resources exponentially decays with time. Hence, the terminal reward of stopping at step-state pair $(t,x)$ is set as $\beta^{t-1} x + \kappa_t$, where $\kappa_t$ is zero mean Gaussian noise with variance \revn{$\sigma^2_r$} and $\beta \in (0,1]$ is the discount factor. 

Next, we prove that the benchmark is optimal. Since the expected total cost is zero, the expected cumulative gain is equal to the expected terminal reward. 
We have $r_{t,x} = \beta^{t-1}x$ and $y_{t,x,cont} = \beta^t(x+p_t)$. Obviously, it is not optimal to stop when the benchmark selects $cont$ at step-state pair $(t,x)$ since instead of stopping, continuing for one more step, and then stopping yields ex-ante terminal reward $y_{t,x,cont} > r_{t,x}$. We also show that continuing when the benchmark selects $stop$ always yields an expected terminal reward that is less than or equal to the expected terminal reward of the benchmark. For this, consider the case that the benchmark stops at step-state pair $(t,x)$, which implies that, $x \geq \beta p_t / (1-\beta)$. This implies that for any $y \geq x$ and $j \geq 0$, $y \geq \beta p_{t+j} / (1-\beta)$. Let $y_0 = x$, and $y_i = y_{i-1} + p_{t+i-1}$ for $i=1,\ldots,k$. Thus, we have $y_i \geq \beta p_{t+i}/(1-\beta)$, which implies that 
\begin{align}
\beta^{t+i-1} y_i \geq \beta^{t+i} y_{i+1}. \label{eqn:they}	
\end{align}
Clearly, stopping $k$ steps after $t$ yields expected terminal reward $\beta^{t+k-1} (x+p_t+\ldots+p_{t+k-1})$. Using \eqref{eqn:they}, we obtain $r_{t,x} \geq \beta^{t} (x+p_t) \geq \beta^{t+1} (x+p_t + p_{t+1}) \geq \ldots \geq \beta^{t+k-1} (x+p_t+\ldots+p_{t+k-1})$, which implies that the benchmark is optimal. 

\textbf{Results}

We compare FeedBAL with two algorithms. The first one is UCB1 \cite{auer}, whose arms are sequences of actions with maximum length $l_{\max}$, where only the last action is $stop$. UCB1 chooses an arm at the beginning of each episode, selects actions according to the chosen arm, and updates only the empirical cumulative gain of the chosen arm at the end of the episode. The second one is a variant of UCB1, which we call UCB1-V. At the end of each episode, UCB1-V updates the empirical cumulative gains of all arms whose terminal rewards are observed (we call these updates virtual updates). For instance, if UCB1-V chooses the arm that corresponds to the sequence with $l$ $cont$ actions followed by $stop$, then it updates the empirical cumulative gains of all arms that correspond to the sequences with $j$ $cont$ actions followed by $stop$ for all $j \leq l$.

\begin{figure}[h!]
	\centering
	\includegraphics[width=0.9\columnwidth]{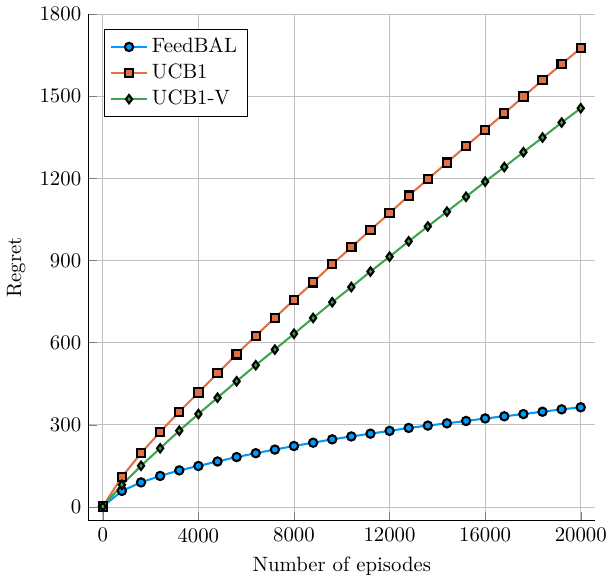}
	\vspace{-0.2in}
	\caption{Regrets of FeedBAL, UCB1 and UCB1-V as a function of the number of episodes.}
	\label{fig_result}
	\vspace{-0.3in}
\end{figure}

In simulations, we set \revn{$l_{\max}=10$, $p_t = 0.8 / \sqrt{t}$, $\beta=0.9$, $\sigma^2_c=0.1$, $\sigma^2_r=0.1$, $\delta=0.01$, $\sigma = \sqrt{0.2}$, and $n=20000$.} Since the benchmark is optimal, we plot the regrets of all algorithms averaged over $1000$ runs against the benchmark. From the results given in Figure \ref{fig_result}, we observe that FeedBAL incurs very small regret and significantly outperforms UCB1 and UCB1-V. The superior performance of FeedBAL comes from the fact that it is able to adapt the action selections based on the feedbacks observed during an episode.

\section{Related Work}\label{sec:related}

eMAB is related to various existing classes of MAB with large action
sets. These include combinatorial bandits \cite{cesa2012combinatorial,gai2012combinatorial},
combinatorial semi-bandits \cite{kveton2015tight}, matroid bandits
\cite{kveton2014matroid}, and bandits in metric spaces \cite{kleinberg2008multi}.
In these works, at each time, the learner (simultaneously) chooses an action tuple and obtains
a reward that is a function of the chosen action tuple. Unlike these works, in eMAB actions in an episode are chosen sequentially, and the previously chosen actions in an episode guide the action selection process within that episode.  

\comment{
The differences between eMAB and these
various classes of MAB are given in Table \ref{table:compcomb}.
\begin{table}[h!]
\centering
\caption{Comparison of eMAB with combinatorial, semi and matroid bandits.}
\label{table:compcomb}
{\fontsize{10}{10}\selectfont
\setlength{\tabcolsep}{.1em}
\begin{tabular}{|l|c|c|}
\hline
         &  \cite{cesa2012combinatorial, gai2010learning, gai2012combinatorial, kveton2015tight, kveton2014matroid} & eMAB \\
         \hline
Arm selection & multiple - simultaneous &  multiple - sequential \\
in each episode &										   &                                     \\
\hline
Reward in    & sum of rewards    & general function of  \\
each episode  & of selected arms  & selected arms and \\
                   &                             & observed feedbacks \\
\hline
Action sequence  & fixed and limited & variable, not limited  \\
length & by action set size & by action set size \\ 
\hline
\end{tabular}
}
\end{table}
}


Another related strand of literature studies MAB with knapsacks \cite{badanidiyuru2013bandits,tran2012knapsack}.
In these problems, there is a {\em budget}, which limits the number
of times a particular action can be selected. The goal is to maximize
the total reward given the budget constraints. However, similar to
standard MAB problems, in these problems it is also assumed that the
reward is immediately available after each selected action, and the
current reward only depends on the current action \ck{unlike eMAB in
which the current reward depends on a sequence of actions and feedbacks through a state.}
Moreover, in eMAB, the budget is renewed after
each episode; and hence, does not limit the number of episodes in which
a certain action can be selected as in MAB with knapsacks.


One of the most closely related prior works is the work on adaptive submodularity
\cite{golovin2010adaptive} where it is shown
that for adaptive submodular reward functions, a simple adaptive greedy policy 
(which resembles our benchmark) is $1-1/e$
approximately optimal. Hence, any learning algorithm that has sublinear
regret with respect to the greedy policy is guaranteed to be approximately
optimal. This work is extended to an online setting in \citet{gabillon2013adaptive},
where prior distribution over the state is unknown and only the reward
of the chosen sequence of actions is observed. However, an independence
assumption is imposed over action states to estimate the prior in
a fast manner. In these works the goal is to select the optimal sequence of items or actions (without replacement) given a fixed budget (on the number of steps), and the item states (feedbacks) are realized before the episode begins.
On the other hand, in our formulation, the same action can be taken in different steps, the number of steps is not fixed but is adapted based on the feedback, and feedback in the current step depends on actions
and feedbacks in prior steps of the current episode.



\comment{
	In this work, existence of an underlying
	joint state that is realized from a prior distribution before the
	start of the action selection process in each episode is assumed. 
	The learner is constrained to pick sequentially $K$ of the actions in the action set in each episode
	without replacement in order to maximize its reward. The states of the selected
	actions are instantly revealed to the learner; hence, at decision step $t$ of an episode, the learner knows the states of all arms selected before step $t$. The reward the learner gets at the end of an episode is a submodular function of the selected arms and their states. 
	
	If we translate the above setting to our eMAB formulation, 
	the joint state can be viewed as a hidden state vector for actions,
	whose components are revealed only after the corresponding actions
	are taken. Hence, this is a special case of the eMAB, where the joint state 
	does not depend on the chosen actions and observed feedbacks. 
}


Our problem is also related to reinforcement learning in MDPs. For instance, in \citet{tewari2008} and \citet{auer2009near}
algorithms with logarithmic regret with respect to the optimal policy
are derived for finite, recurrent MDPs. However, the proposed algorithms rely on variants of value iteration or linear programming, and hence, have higher computational complexity than our proposed method.
Episodic MDPs are
studied in \citet{zimin2013online}, and sublinear regret bounds are
derived assuming that the loss sequence is generated by an adversary.
eMAB differs from these works as follows: 
(i) the number of visited states (steps) in each episode is not fixed; 
(ii) During an episode, only feedbacks are observed and no reward observations are available for the intermediate states; (iii) Rewards of the intermediate states are only revealed at the end of the episode.
Recently, improved gap-independent regret bounds are derived for reinforcement learning in MDPs by using an optimistic version of value iteration \cite{azar2017minimax} for episodic MDPs and posterior sampling for non-episodic MDPs \cite{agrawal2017optimistic}.
While it is possible to translate eMAB into an MDP, finding the optimal policy in the MDP is more challenging than competing with our benchmark, both in terms of the speed of learning and cost of computation.
Thus, eMAB can be seen as a bridge between standard MAB and reinforcement learning in MDPs, where the order of actions taken in each episode matters and the learner aims to perform as good as a {\em moderate} benchmark which may not always be optimal, but outperforms the best fixed action and works well in a wide range of settings.  

\comment{
\begin{table}[h!]
\centering
\caption{Comparison of eMAB with optimization and reinforcement learning algorithms.}
\label{table:comprein}
{\fontsize{10}{10}\selectfont
\setlength{\tabcolsep}{.1em}
\begin{tabular}{|l|c|c|c|}
\hline
& PI, VI  & Q-learning, TD($\lambda$) & eMAB \\
\hline
Transition  & known &  unknown & unknown \\
probabilities & & & \\
\hline
Convergence & always  & may converge  & converges \\
to optimal   &    optimal  & asymptotically & asymptotically  \\
\hline
Regret    & zero    & may be      & logarithmic   \\ 
in time   &  & sublinear   &                      \\      
\hline
Efficient for: & small action  & small action & large action  \\
& sequences & sequences & sequences \\
\hline
\end{tabular}
}
\end{table}
}

\comment{
\subsection{Online and Stochastic Convex Optimization}

We will finish our discussion of the related work by differentiating
eMAB from {\em Online Convex Optimization} (OCO) and 
{\em Stochastic Convex Optimization} (SCO) problems.

In OCO, there is a learner which sequentially chooses actions (from a
convex set) over time, and incurs a loss (from a convex function)
after each chosen action. The loss function is generated by an adversary
and is unknown to the learner beforehand. The goal of the learner
is to minimize its regret, which is the difference between the total
loss it accumulates and the loss of the best fixed action it could
have followed (best fixed strategy in hindsight). Many versions of
OCO exist including {\em full feedback} \cite{kivinen1997exponentiated,zinkevich2003online,hazan2007logarithmic},
in which the learner observes the entire loss function after each
decision step, and {\em bandit feedback} \cite{flaxman2005online,awerbuch2008online},
in which the learner partially observes the loss evaluated at the
chosen action.

OCO and eMAB have two fundamental differences:
(i) In eMAB, the learner selects multiple actions during an episode, and the selected
actions effect the actions that will be selected in future; whereas
in OCO full or partial loss function is observed after every taken
action, and the reward only depends on the current taken action. 
(ii) The regret of an eMAB algorithm is measured with respect to the adaptive
benchmark, which myopically adapts the next action to select in an
episode based on the previously selected actions and observed feedbacks;
whereas in OCO the regret is measured with respect to the best fixed
action in hindsight. While the action sequence selected by the adaptive
benchmark can change from episode to episode based on the sequence of
observed feedbacks, the action sequence selected by the benchmark
of the OCO is fixed among episodes. Hence eMAB and OCO are different
both in terms of the way rewards are generated and performance is
evaluated.

In SCO, the goal is to minimize a convex loss function using finite number queries obtained from a gradient oracle \cite{hazan2014beyond}. Numerous methods have been proposed to efficiently solve this problem, such as a batch reduction from an OCO problem and {\em alternating direction method of
multipliers} based methods \cite{tao2014stochastic}. The differences of SCO from eMAB are similar in flavor to that of OCO from eMAB. In addition to this, the objective function of SCO is also different from that of OCO and eMAB.
}


\section{Conclusion}\label{sec:conc}
We proposed a new class of online learning methods called eMAB. Although the number of possible sequences of actions increases exponentially with the length of the episode, we proved that an efficient online learning algorithm which has expected regret that grows polynomially in the number of steps and states, and logarithmically in the number of episodes exists. This algorithm enjoys high probability confidence bounds on the expected gain of selected actions, and its regret is shown to be bounded with high probability.

\bibliography{OSA}
\bibliographystyle{icml2018}

\section*{APPENDICES}

\section{Approximate Optimality of the Benchmark for Adaptive Monotone Submodular eMAB}

In \citet{gabillon2013adaptive}, the adaptive submodular function to be maximized is given as $h: 2^{\cal A} \times \{-1,1\}^{S_{ {\cal A}}} \rightarrow \mathbb{R}$, where $2^{\cal A}$ denotes the power set of ${\cal A}$. 
The feedback observed after selecting an action is the state of that action.
Based on this, the set of observations is defined as ${\cal Y} := \{-1,0,1\}^{S_{ {\cal A}}}$. For an observation vector $\bs{y} \in {\cal Y}$, $y_a = 0$ implies that action $a$ is not selected, and hence, its state is not observed, while $y_a = i$, $i \in \{-1,1\}$ implies that action $a$ is selected and its state is observed as $i$. Let $\text{dom}(\bs{y})$ denote the set and $l(\bs{y})$ denote the number of actions whose states are observed according to observation vector $\bs{y}$.
They define the greedy policy for maximizing $h$ as $\pi^g$, such that given an observation vector $\bs{y}$, it selects the action
\begin{align}
&\pi^g(\bs{y}) = \argmax_{a \in {\cal A} - \text{dom}(\bs{y}) } \notag \\
&\mr{E}_{ \bs{s} | \bs{y} } [ h( \text{dom}(\bs{y}) \cup \{ a \} , \bs{s} ) 
- h( \text{dom}(\bs{y}) , \bs{s} )   ]  \label{eqn:golovin}
\end{align}
where the expectation is taken over the conditional distribution of $\bs{s}$ given $\bs{y}$. 
By linearity of conditional expectation \eqref{eqn:golovin} can be re-written as
\begin{align}
\mr{E}_{ \bs{s} | \bs{y} } [ h( \text{dom}(\bs{y}) \cup \{ a \} , \bs{s} ) ]
- \mr{E}_{ \bs{s} | \bs{y} } [ h( \text{dom}(\bs{y}) , \bs{s} )  ]  .    \notag
\end{align}
Note that the second term in the above equation does not depend on the choice of $a \in {\cal A} - \text{dom}(\bs{y}) $. Hence, $\pi^g$ can equivalently be defined as 
\begin{align}
&\pi^g(\bs{y}) = \argmax_{a \in {\cal A} - \text{dom}(\bs{y}) } 
\mr{E}_{ \bs{s} | \bs{y} } [ h( \text{dom}(\bs{y}) \cup \{ a \} , \bs{s} ) 
] . \label{eqn:golovin2}      
\end{align}

For a given feedback sequence $\bs{f}$, let $\bs{y}(\bs{f})$ be the observation vector that corresponds to $\bs{f}$. If $\bs{f}$ includes the feedback for action $a$, then $y_a(\bs{f})$ corresponds to this feedback, which is in $\{-1,1\}$. Otherwise, $y_a(\bs{f}) = 0$. Also, for an observation vector $\bs{y}$, let $\bs{s}(\bs{y})$ denote the states of actions in $\text{dom}(\bs{y}) \subset {\cal A}$. 
It is natural to assume in the setting of \citet{gabillon2013adaptive} that $h( \text{dom}(\bs{y}) \cup \{ a \} , \bs{s} ) $ only depends on the states of the actions in $\text{dom}(\bs{y}) \cup \{ a \}$. In \citet{gabillon2013adaptive}, an example of this is given for the maximum coverage problem. Moreover, it is assumed that the state of each action is drawn independently of the other actions. When the assumptions above hold, \eqref{eqn:golovin2} becomes
\begin{align}
&\pi^g(\bs{y}) = \argmax_{a \in {\cal A} - \text{dom}(\bs{y}) } 
\mr{E}_{ s_a } [ h( \text{dom}(\bs{y}) \cup \{ a \} , (\bs{s}(\bs{y}),  s_a ) ) ] . \label{eqn:golovin3}      
\end{align}

Let $t = S_{\text{dom}(\bs{y})} +1$ and ${\cal A}_t = \text{dom}(\bs{y})$ be the set of actions selected in the first $t$ steps. The above definition is equivalent to our benchmark if we define the state as the pair $(\text{dom}(\bs{y}), \bs{s}(\bs{y}))$.  Then, the ex-ante terminal reward of action $a \in {\cal A} - \text{dom}(\bs{y})$ becomes
\begin{align}
y_{t, (\text{dom}(\bs{y}), \bs{s}(\bs{y})), a} = E_{ s_{a} } 
[ r_{t+1, (\text{dom}(\bs{y}) \cup \{ a \}, (\bs{s}( \bs{y}), s_a)  )  } ]      \notag
\end{align}
where
\begin{align}
r_{S_{{\cal E}}+1, ({\cal E}, \bs{s}( {\cal E} )) } = h( {\cal E}, \bs{s}( {\cal E} ) ) .   \notag
\end{align}

It is shown in \citet{golovin2010adaptive} that the greedy policy is guaranteed to obtain at least $1-1/e$ of the expected reward of the optimal policy. Now consider our benchmark in this setting. Since it is known that $c_{t,x,a} = 0$, our benchmark will only stop after all actions in ${\cal A}$ are selected once. Therefore, our benchmark is $1-1/e$ approximately optimal for this special case. 

\section{Proof of Lemma 1} 

Fix any step-state-action triplet $(t,x,a)$. 
Let 
\begin{align}
{\cal E}_{\text{conf}}(t,x,a) :=  \left\{ | \hat{g}^{\rho}_{t,x,a} - g_{t,x,a} | \leq c_{t,x,a} ~ \forall \rho \geq 1  \right\}      \notag
\end{align}
By replacing $\delta$ term in \eqref{eqn:stopbound} and \eqref{eqn:sbound5} given in Appendix \ref{app:tripletbound} with $\delta/( l_{\max} S_{ {\cal X} } S_{ \bar{{\cal A}} } )$, we get
$\Pr( {\cal E}_{\text{conf}}(t,x,a) ) \geq 1 - \delta/( l_{\max} S_{ {\cal X} } S_{ \bar{{\cal A}} })$ (details can be found in Appendix \ref{app:tripletbound}). This implies that $\Pr( {\cal E}^c_{\text{conf}}(t,x,a) ) \leq \delta/( l_{\max} S_{ {\cal X} } S_{ \bar{{\cal A}} } )$ for all $t \in [l_{\max}]$, $x \in {\cal X}$ and $a \in \bar{ {\cal A} }$. 
Using a union bound, we get 
\begin{align}
\Pr({\cal E}^c_{\text{conf}}) 
&= \Pr \left( \bigcup_{t \in [l_{\max}]} \bigcup_{x \in {\cal X}} \bigcup_{a \in \bar{{\cal A}}}  {\cal E}^c_{\text{conf}}(t,x,a) \right)       \notag \\
&\leq \sum_{t \in [l_{\max}]} \sum_{x \in {\cal X}} \sum_{a \in \bar{{\cal A}}} \Pr( {\cal E}^c_{\text{conf}}(t,x,a) ) \notag \\
& \leq \delta. \notag 
\end{align}

\section{Proof of Lemma 2} 

For $\rho=1$, the result is trivial. For $\rho > 1$, the proof proceeds in a way that is similar to the proof of Lemma 6 in \citet{abbasi2011improved}. First, assume that action $a \in \bar{{\cal A}}$ is selected in step $t \in [T_{\rho}]$ of episode $\rho$ when the state is $x$.
Since 
\begin{align}
\hat{g}^{\rho}_{t,x,a} &\in [ g_{t,x,a} - \text{conf}^{\rho}_{t,x,a},  g_{t,x,a} + \text{conf}^{\rho}_{t,x,a}  ] \notag \\ 
\hat{g}^{\rho}_{t,x,a^*} &\in [ g^*_{t,x} - \text{conf}^{\rho}_{t,x,a^*},  g^*_{t,x} + \text{conf}^{\rho}_{t,x,a^*}  ], ~~ a^* \in {\cal O}_{t,x} \notag
\end{align}
on event ${\cal E}_{\text{conf}}$, using
\begin{align}
\hat{g}^{\rho}_{t,x,a} + \text{conf}^{\rho}_{t,x,a} & \geq  g^*_{t,x}  \label{eqn:confi1} \\
\hat{g}^{\rho}_{t,x,a}  & \leq g_{t,x,a} + \text{conf}^{\rho}_{t,x,a} \label{eqn:confi2}
\end{align}
and the definition of $\Delta_{t,x,a}$, we obtain $\text{conf}^{\rho}_{t,x,a} \geq \Delta_{t,x,a} / 2$. Substituting the values in \revn{Equations 3 and 4 of the manuscript} into $\text{conf}^{\rho}_{t,x,a}$ and using the fact that $(z^2 - 1)/(z+1) \leq z^2/(z+1)$ for positive integers $z$, we get for $a \in {\cal A}$
\begin{align}
\frac{ (N^{\rho}_{t,x,a})^2 - 1 }{ N^{\rho}_{t,x,a} +1  } 
\leq \frac{4}{\Delta^2_{t,x,a}}
\left( 4 \sigma^2 \log \left( \frac{ K (1 + N^{\rho}_{t,x,a})^{1/2} }{\delta} \right)   \right) \label{eqn:trickterm}
\end{align}

Now, assume that the $s := stop$ action is selected in step $t =T_{\rho}$ of episode $\rho$ when the state is $x$. Let 
\begin{align}
\overline{\text{conf}}^{\rho}_{t,x,s} =  \sqrt{ \frac{(1 + N^{\rho}_{t,x,s})}{ (N^{\rho}_{t,x,s} )^2} 
	\left( 4 \sigma^2 \log \left( \frac{ K (1 + N^{\rho}_{t,x,s})^{1/2} }{\delta} \right)   \right) }  .  \notag
\end{align}
Since $N^{\rho}_{t,x,s} \leq N^{\rho}_{t,x}$, we have $\text{conf}^{\rho}_{t,x,s} \leq \overline{\text{conf}}^{\rho}_{t,x,s}$, which implies that on event ${\cal E}_{\text{conf}}$
\begin{align}
\hat{g}^{\rho}_{t,x,a} &\in [ g_{t,x,a} - \overline{\text{conf}}^{\rho}_{t,x,s},  g_{t,x,a} + \overline{\text{conf}}^{\rho}_{t,x,s} ]      \notag \\
\hat{g}^{\rho}_{t,x,a^*} &\in [ g^*_{t,x} - \text{conf}^{\rho}_{t,x,a^*},  g^*_{t,x} + \text{conf}^{\rho}_{t,x,a^*}  ], ~ a^* \in {\cal O}_{t,x} . \notag
\end{align}
Using
\begin{align}
\hat{g}^{\rho}_{t,x,s} + \overline{\text{conf}^{\rho}_{t,x,s}} & \geq  g^*_{t,x}  \notag \\
\hat{g}^{\rho}_{t,x,s}  & \leq g_{t,x,s} + \overline{\text{conf}^{\rho}_{t,x,a}} \notag
\end{align}
and the definition of $\Delta_{t,x,a}$, we obtain $\overline{\text{conf}^{\rho}_{t,x,s}} \geq \Delta_{t,x,s} / 2$. This implies that \eqref{eqn:trickterm} also holds for $a = stop$.

Next, we use a lemma from \citet{antos2010active} to bound \eqref{eqn:trickterm}, which also given in Appendix \ref{app:antos}. 
From \eqref{eqn:trickterm} we obtain
\begin{align}
N^{\rho}_{t,x,a} &\leq 1 + \frac{16\sigma^2}{\Delta^2_{t,x,a}} \log \left( \frac{K}{\delta} \right)  \notag \\
&+ \frac{8\sigma^2}{\Delta^2_{t,x,a}} \log (1 + N^{\rho}_{t,x,a} )   \label{eqn:trickterm2}
\end{align}
Since $1 + N^{\rho}_{t,x,a}   \geq 1$, we substitute $a= \Delta^2_{t,x,a} / (16 \sigma^2 )$ and $b = \log (   16 \sigma^2 / \Delta^2_{t,x,a})$ in Appendix \ref{app:antos} to get the bound
\begin{align}
\log (1 + N^{\rho}_{t,x,a} ) \leq a (1 + N^{\rho}_{t,x,a} ) + b .  \notag
\end{align}
The result is obtained by substituting this into \eqref{eqn:trickterm2}.

\section{Proof of Corollary 1} 

The result follows by a simple application of \eqref{eqn:confi1} and \eqref{eqn:confi2} on event ${\cal E}_{\text{conf}}$.

\section{Proof of Theorem 1} 

The proof directly follows by summing the result of \revn{Lemma 2} among all step-state-action triplets $(t,x,a)$.

\section{Proof of Theorem 2} 

Consider \revn{Theorem 1}. With probability $\delta$, the regret is bounded above by $n \Omega_{\max}$. With probability $1-\delta$, the regret is bounded by the theorem's main statement. The proof follows from the law of total expectation.

\section{A Confidence Bound for Step-State-Action Triplet} \label{app:tripletbound}

First, we consider the confidence bound for the $stop$ action. Fix $t \in [l_{\max}]$ and $x \in {\cal X}$. Let $\epsilon_{\rho} = \mr{I} (t \leq T_{\rho}, x^{\rho}_t = x)$. \revn{Since, $\{ \kappa^{\rho}_t \}_{\rho=1}^{\infty}$ is a sequence of $\sigma$-sub-Gaussian random variables,} using the result of Theorem 1 in \citet{abbasi2011improved}, it can be shown that given any $\delta >0$ with probability at least $1- \delta$ we have for all $\rho \geq 2$
\begin{align}
& \left( \frac{ | \sum_{l=1}^{\rho-1} \epsilon_{l} \kappa^{l}_t | }{ \sqrt{1 + N^{\rho}_{t,x} } } \right)^2  
 \leq 2 \sigma^2  \log \left( \frac{ \sqrt{1 + N^{\rho}_{t,x}} } {\delta}      \right)   \Rightarrow    \notag \\
&
 \bigg| \sum_{l=1}^{\rho-1} \epsilon_{l} \kappa^{l}_t \bigg| 
\leq \sqrt{ (1 + N^{\rho}_{t,x})  2 \sigma^2  \log \left( \frac{ \sqrt{1 + N^{\rho}_{t,x}} } {\delta}      \right)    } . \label{eqn:sbound3}
\end{align}
Observe that 
\begin{align}
\hat{r}^{\rho}_{t,x} = \frac{\sum_{l=1}^{\rho-1} ( r_{t,x} \epsilon_{l} + \kappa^{l}_t \epsilon_{l} )}{ N^{\rho}_{t,x}  } = r_{t,x} + \frac{\sum_{l=1}^{\rho-1}  \kappa^{l}_t \epsilon_{l} } { N^{\rho}_{t,x}  } .      \notag
\end{align}
Hence
\begin{align}
| \hat{r}^{\rho}_{t,x} - r_{t,x} | = \frac{1}{N^{\rho}_{t,x}}  \bigg| \sum_{l=1}^{\rho-1} \epsilon_{l} \kappa^{l}_t \bigg| .    \notag
\end{align}
Combining this with \eqref{eqn:sbound3} we obtain with probability at least $1-\delta$ for all $\rho \geq 2$
\begin{align}
| \hat{r}^{\rho}_{t,x} - r_{t,x} | 
\leq 
\sqrt{ \frac{(1 + N^{\rho}_{t,x})}{(N^{\rho}_{t,x})^2} 2 \sigma^2  \log \left( \frac{ \sqrt{1 + N^{\rho}_{t,x}} } {\delta}      \right)   }  .
      \notag
\end{align}
Since by definition $g_{t,x,stop} = r_{t,x}$ and $\hat{g}_{t,x,stop} = \hat{r}_{t,x}$ we get with probability at least $1-\delta$
\begin{align}
\forall \rho \geq 2 ~~ & | \hat{g}^{\rho}_{t,x,stop} - g_{t,x,stop} | \notag \\
&\leq \sqrt{ \frac{(1 + N^{\rho}_{t,x})}{(N^{\rho}_{t,x})^2} 2 \sigma^2  \log \left( \frac{ \sqrt{1 + N^{\rho}_{t,x}} } {\delta}      \right) } .  \label{eqn:stopbound}
\end{align}     

Next, we consider the confidence bound for actions $a \in {\cal A}$. Fix $t \in [l_{\max}]$, $x \in {\cal X}$ and $a \in {\cal A}$.
With an abuse of notation let $\epsilon_{\rho} = \mr{I} (t < T_{\rho}, x^{\rho}_t = x, a^{\rho}_t = a   )$. Consider the random variable 
\begin{align}
Y^{\rho}_t := R^{\rho}_{t+1} - C^{\rho}_t = g_{t,x^\rho_t,a^\rho_t} + \kappa^{\rho}_{t+1} -\eta^{\rho}_t      \notag
\end{align}
which is used to update the sample mean gain (\rev{line 16} of \revn{Algorithm 2}). 
Let $\beta^{\rho}_t := \kappa^{\rho}_{t+1} -\eta^{\rho}_t$. 
Since $\kappa^{\rho}_{t+1}$ and $\eta^{\rho}_t$ are independent $\sigma$-sub-Gaussian random variables, $\beta^{\rho}_t$ is $\sqrt{2} \sigma$-sub-Gaussian. In addition, $\{ \beta^{\rho}_t \}_{\rho=1}^{\infty}$ is a sequence of independent random variables. 

Using the result of Theorem 1 in \citet{abbasi2011improved}, it can be shown that given any $\delta >0$ with probability at least $1- \delta$ we have for all $\rho \geq 2$
\begin{align}
& \left( \frac{ | \sum_{l=1}^{\rho-1} \epsilon_{l} \beta^{l}_t | }{ \sqrt{1 + N^{\rho}_{t,x,a} } } \right)^2  
 \leq 4 \sigma^2  \log \left( \frac{ \sqrt{1 + N^{\rho}_{t,x,a}} } {\delta}      \right)     \Rightarrow   \notag \\
&\bigg| \sum_{l=1}^{\rho-1} \epsilon_{l} \beta^{l}_t \bigg| 
\leq \sqrt{ (1 + N^{\rho}_{t,x,a})  4 \sigma^2  \log \left( \frac{ \sqrt{1 + N^{\rho}_{t,x,a}} } {\delta}      \right)    }  . \label{eqn:sbound4}
\end{align}
Observe that 
\begin{align}
\hat{g}^{\rho}_{t,x,a} = \frac{\sum_{l=1}^{\rho-1} ( g_{t,x,a} \epsilon_{l} + \beta^{l}_t \epsilon_{l} )}{ N^{\rho}_{t,x,a}  } = g_{t,x,a} + \frac{\sum_{l=1}^{\rho-1}  \beta^{l}_t \epsilon_{l} } { N^{\rho}_{t,x,a}  } .      \notag
\end{align}
Hence
\begin{align}
| \hat{g}^{\rho}_{t,x,a} - g_{t,x,a} | = \frac{1}{N^{\rho}_{t,x,a}}  \bigg| \sum_{l=1}^{\rho-1} \epsilon_{l} \beta^{l}_t \bigg| .    \notag
\end{align}
Combining this with \eqref{eqn:sbound4} we obtain with probability at least $1-\delta$
\begin{align}
\forall & \rho \geq 2 ~~ 
| \hat{g}^{\rho}_{t,x,a} - g_{t,x,a} | \notag \\
& \leq 
\sqrt{ \frac{(1 + N^{\rho}_{t,x,a})}{(N^{\rho}_{t,x,a})^2} 4 \sigma^2
  \log \left( \frac{ \sqrt{1 + N^{\rho}_{t,x,a}} } {\delta}      \right)   } . \label{eqn:sbound5}
\end{align}

\section{Lemma 8 of \citet{antos2010active}} \label{app:antos}

Let $a > 0$. For any 
\begin{align}
\tau > \frac{2}{a} \left( \log \left( \frac{1}{a} \right) - b \right)^{+}      \notag
\end{align}
we have $a \tau + b \geq \log \tau$, where $a^{+} = \max(a,0)$.

\section{Additional Numerical Results}

In Section 5 of the paper we present the results for FeedBAL by setting $\sigma^2 = 0.2$ since $\sigma^2_c$ and $\sigma^2_r$ are taken as $0.1$. This term, which comes from the $\sigma$-sub-Gaussian noise process assumption appears in the confidence numbers of FeedBAL. Here, we give regret results for FeedBAL when it takes as input $\sigma^2$ values from the set $\{0.05, 0.02, 0.2, 0.4\}$. The results given in Figure 1 show that the regret of FeedBAL is the smallest for $\sigma^2 = 0.05$ and the largest for $\sigma^2 = 0.02$. This shows that shrinking the confidence intervals beyond the theoretical limit suggested in \citet{abbasi2011improved} may result in a sharp increase in the regret. 
On the other hand, the regret of FeedBAL for $\sigma^2 = 0.4$ is larger than the regret for $\sigma^2 = 0.2$ but smaller than the regret for $\sigma^2 = 0.02$, which is expected since a larger confidence number implies a greater number of explorations. 

\begin{figure}[h!]
	\centering
	\includegraphics[width=0.9\columnwidth]{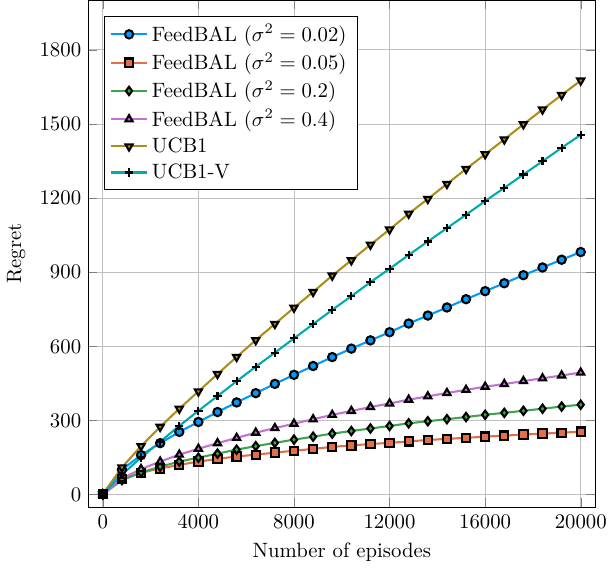}
	\vspace{-0.2in}
	\caption{Regrets of FeedBAL, UCB1 and UCB1-V as a function of the number of episodes.}
	\label{fig_result}
\end{figure}

\section*{Acknowledgement}

We thank Hamza Yusuf \c{C}ak{\i}r for his help in simulations.

\end{document}